\documentclass[lettersize,journal]{IEEEtran}
\usepackage{amsmath,amsfonts,amssymb}
\usepackage{algorithm}
\usepackage{algpseudocode}
\usepackage{array}
\usepackage[caption=false,font=normalsize,labelfont=sf,textfont=sf]{subfig}
\usepackage{textcomp}
\usepackage{stfloats}
\usepackage{url}
\usepackage{verbatim}
\usepackage{graphicx}
\usepackage{cite}
\usepackage{xcolor}         
\usepackage{multirow}
\usepackage{booktabs} 
\usepackage{graphicx}
\usepackage{bm}
\usepackage[colorlinks,citecolor=blue,urlcolor=blue,linkcolor=blue]{hyperref}
\usepackage{booktabs}
\usepackage{multirow}
\usepackage{xcolor} 
\usepackage{colortbl}
\usepackage{makecell}
\usepackage{amsmath}  
\usepackage{amsthm}   
\usepackage{tabularx}
\newtheorem{theorem}{Theorem}
\newcommand{\M}{DiffTSP}
\def\BibTeX{{\rm B\kern-.05em{\sc i\kern-.025em b}\kern-.08em
    T\kern-.1667em\lower.7ex\hbox{E}\kern-.125emX}}
\usepackage{balance}

\begin{document}

\title{One Pass for All: A Discrete Diffusion Model for Knowledge Graph Triple Set Prediction}

\author{IEEE Publication Technology,~\IEEEmembership{Staff,~IEEE,}
}
\author{Jihong Guan, Jiaqi Wang, Wengen Li, Hanchen Yang, Yichao Zhang, Shuigeng Zhou.        
\thanks{Jihong Guan, Jiaqi Wang, Wengen Li, Hanchen Yang and Yichao Zhang  are with the School of Computer Science and Technology, Tongji University, Shanghai, China~(jhguan, wangjq, lwengen, neoyang, yichaozhang@tongji.edu.cn). Shuigeng Zhou is with the School of Computer Science, Fudan University, Shanghai, China~(sgzhou@fudan.edu.cn).}
\thanks{ Wengen Li is the corresponding author.}
}

\markboth{Journal of \LaTeX\ Class Files,~Vol.~18, No.~9, September~2020}%
{Shell \MakeLowercase{\textit{et al.}}: A Sample Article Using IEEEtran.cls for IEEE Journals}


\maketitle
\begin{abstract}
Knowledge Graphs (KGs) are composed of triples, and the goal of Knowledge Graph Completion (KGC) is to infer the missing factual triples. Traditional KGC tasks predict missing elements in a triple given one or two of its elements. As a more realistic task, the Triple Set Prediction (TSP) task aims to infer the set of missing triples conditioned only on the observed knowledge graph, without assuming any partial information about the missing triples. 
Existing TSP methods predict the set of missing triples in a triple-by-triple manner, falling short in capturing the dependencies among the predicted triples to ensure consistency. To address this issue, we propose a novel discrete diffusion model termed DiffTSP that treats TSP as a generative task. DiffTSP progressively adds noise to the KG through a discrete diffusion process, achieved by masking relational edges. The reverse process then gradually recovers the complete KG conditioned on the incomplete graph. To this end, we design a structure-aware denoising network that integrates a relational context encoder with a relational graph diffusion transformer for knowledge graph generation. DiffTSP can generate the complete set of triples in a one-pass manner while ensuring the dependencies among the predicted triples. Our approach achieves state-of-the-art performance on three public datasets. Code: https://github.com/ADMIS-TONGJI/DiffTSP.
\end{abstract}
\begin{IEEEkeywords}
knowledge graph completion, triple set prediction, discrete diffusion model
\end{IEEEkeywords}
\section{Introduction}
Knowledge Graphs (KGs), which represent factual information as a collection of \textit{(head entity, relation, tail entity)} triples, have become a cornerstone of modern AI systems~\cite{ji2021survey,liu2024bridgingwww}, powering applications from web search~\cite{wu2024query2gmmwww} to question answering~\cite{zheng2018question,DBLP:conf/naacl/YasunagaRBLL21,kgt5} and recommendation systems~\cite{DBLP:conf/kdd/ZhangJD0YCTHWHC21}. Despite their widespread applications, a fundamental and persistent problem is their inherent incompleteness. 
Real-world KGs usually exhibit extensive incompleteness, with a significant proportion of valid facts missing~\cite{krishnan2024methodwww}.
To solve this problem, Knowledge Graph Completion (KGC) has emerged as an important research topic~\cite{chang2024path,chang2025integrate}.

Existing KGC tasks, such as link prediction~\cite{dettmers2018convolutional,lin2015learning,pan2024unifying} and instance completion~\cite{rosso2021instance}, have made significant strides. However, they operate under a strong and often impractical assumption that some elements of the missing triples are known in advance~\cite{zhang2024starttsp}. For example, link prediction aims to find a missing entity or relation given the other two elements, e.g., \textit{(h, r, ?)} where $h$ is the head entity and $r$ is one certain relation~\cite{shi2024tgformer}. In many real-world scenarios, we lack such prior knowledge. The ultimate goal of automatic KGC is not just to fill in a blank triple by triple, but to discover entirely new and complete factual triples from the incomplete KG. Therefore, a more practical formulation for KGC is the Triple Set Prediction (TSP) task that directly predicts the entire set of missing factual triples, given only the incomplete knowledge graph as input~\cite{yuan2024largetsp}. 
\begin{figure}
    \centering
    \includegraphics[width=1\linewidth]{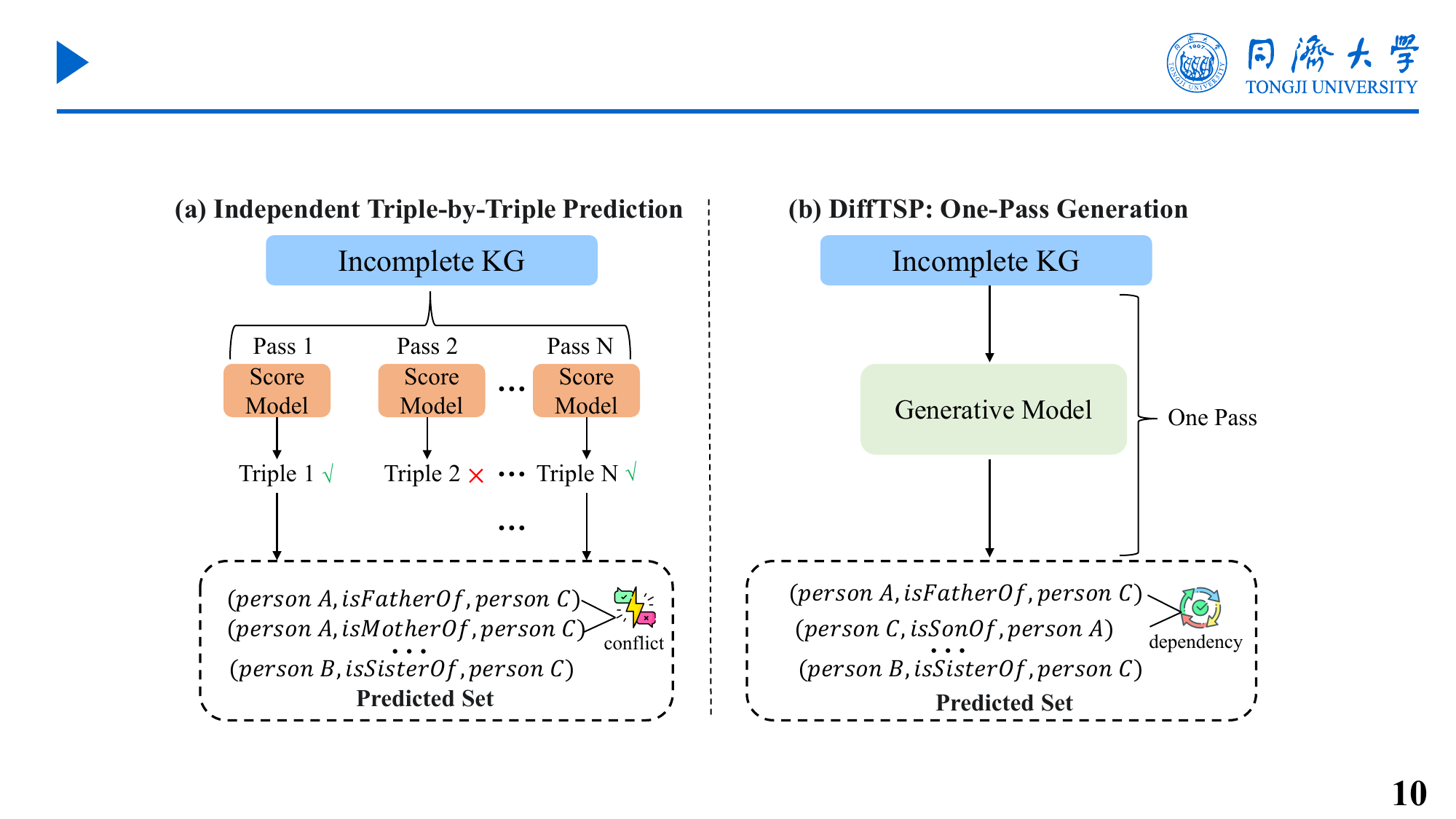}
    \caption{Independent triple set prediction vs DiffTSP. (a) Independent prediction scores each triple separately, and may result in conflicts in the predicted set. (b) DiffTSP (our model) generates triples in one pass, and could capture their joint distribution and preserve inter-triple dependencies.}

    \label{fig:motivation}
\end{figure}

Despite its importance, TSP faces a critical challenge, i.e., maintaining the dependency among predicted triples. The triples in a KG are not independent, but interconnected and constrained. For example, if a model predicts the triple \textit{(person A, isFatherOf, person B)}, it should be more likely to also predict the symmetric triple \textit{(person B, isSonOf, person A)}. Conversely, predicting \textit{(person A, isFatherOf, person C)} should preclude the prediction of \textit{(person A, isMotherOf, person C)}. 
Existing methods for TSP, as depicted in Figure~\ref{fig:motivation}(a), score triples individually and select those with scores exceeding a certain threshold to obtain the final predicted set. Such an isolated approach fails to capture crucial inter-triple dependencies, often resulting in a set of triples that lacks accuracy and consistency. 
Therefore, to effectively address the TSP task, we must move beyond independent triple scoring, and instead model the joint distribution of the triple set. This calls for a generative framework, where the prediction of each triple is conditioned on the others, thereby producing outputs that faithfully capture inter-triple dependencies and maintain internal consistency.

Diffusion models~\cite{ho2020ddpm} have become exceedingly popular for their powerful capabilities in image~\cite{rombach2022latent,lugmayr2022repaint} and text generation~\cite{nie2025llada,arriola2025block}, which inspires us to leverage diffusion models' generative strength for TSP.
Although some existing studies, such as KGDM~\cite{long2024kgdm}, FDM~\cite{long2024fact} and LLM-DR~\cite{chen2025diffllm} have applied diffusion models to knowledge graphs~\cite{cao2025difft,chen2025diffllm,gao2025unifying}, they focus on the link prediction task and score triples individually, struggling to capture inter-triple dependencies. 
Furthermore, most of these approaches are based on continuous diffusion models which are not well-suited to handle the inherently sparse and structural characteristics of KGs.

In this work, we introduce \M, a novel discrete diffusion model designed to overcome the challenge of capturing inter-triple dependency in the TSP task. As depicted in Figure~\ref{fig:motivation}(b), \M~operates via a unified "One-Pass Generation" process, where all the missing triples are generated holistically rather than as a sequence of independent predictions. 
The core idea of \M~is to capture the inter-triple dependencies that are critical for generating a coherent triple set. Therefore, we aim to learn the joint probability distribution of all triples, rather than scoring them in isolation. To this end, we model the joint distribution of the triple set via stepwise denoising, where each triple is predicted in the context of other triples in the evolving graph, naturally enforcing inter-triple dependencies. 

To learn the dependencies effectively, we design a structure-aware denoising network. Specifically, it consists of two synergistic modules: a Relational Context Encoder (RCE) and a Relational Graph Diffusion Transformer (RelDiT). The RCE performs relation-guided message passing to capture local structural dependencies among entities. It aggregates neighbor information conditioned on relation types, providing entity representations that reflect the local relational context. Built upon the entity representations, the RelDiT further models global structural dependencies via relational attention mechanisms. This module enables the denoising network to capture long-range dependencies among entities during the diffusion process.
\M~is composed of three key modules. First, a forward process systematically corrupts a graph by progressively masking its relational edges. Subsequently, our structure-aware denoising network is trained to reverse this corruption by predicting the original clean graph given a noisy graph and the incomplete graph as a condition. Finally, we employ an iterative sampling process to generate a complete and internally consistent set of triples.


In sum, our contributions are threefold: 1) We propose a novel discrete diffusion model for the triple set prediction task on KGs, establishing a new paradigm for generative knowledge graph completion. 2) We propose a structure-aware denoising network that integrates a relational context encoder with a relational graph diffusion transformer to capture the inter-triple dependency in knowledge graph generation. 3) Through extensive experiments on three public datasets, we demonstrate the superiority of our approach against multiple strong baselines, as well as the effectiveness and necessity of specific designs in \M.
\section{Related Works}
Our work is positioned at the intersection of knowledge graph completion and generative diffusion models for graphs. We thus review relevant literature in both areas.
\subsection{Knowledge Graph Completion}
KGC aims to address the incompleteness issue of KGs by inferring missing facts~\cite{wang2025raker}. This task has been formulated in different ways, e.g., Link Prediction, Instance Completion, and Triple Set Prediction~\cite{zhang2024starttsp}.
Specifically, link prediction is the most widely studied KGC task. Given a triple with one missing element, either the head entity \textit{(?, r, t)}, tail entity \textit{(h, r, ?)}, or relation \textit{(h, ?, t)}, the goal is to predict the missing element. A representative approach for link prediction in KGs is to learn low-dimensional embeddings for entities and relations (e.g., TransE~\cite{bordes2013translating}, RotatE~\cite{sun2019rotate}, HAKE~\cite{zhang2020hake}, PairE~\cite{chao2020pairre}), and use a scoring function to rank candidate entities or relations. 
There are also some methods using graph networks (e.g., RGCN~\cite{2018rgcn}, CompGCN~\cite{CompGCN},  MorsE~\cite{DBLP:conf/sigir/Chen0ZZYXC22}, AstarNet~\cite{DBLP:conf/nips/astarnet}, and ULTRA~\cite{DBLP:conf/iclr/ultra} ).
While these models are powerful, they score each candidate entity or relation independently, which is insufficient for capturing the dependency among predicted triples.
Instance completion~\cite{rosso2021instance} predicts all missing relation-tail pairs \textit{(?, ?)} for a given head entity \textit{h}. This formulation moves from single-element prediction to a one-to-many completion setting, but still requires the head entity to be specified beforehand.
Triple Set Prediction is the most general and challenging formulation for the KGC task, and aims to predict the whole set of missing factual triples given only the existing incomplete graph. Existing methods~\cite{yuan2024largetsp,zhang2024starttsp} for TSP usually adapt from a triple-by-triple prediction pipeline, which fail to model inter-triple dependencies well. Our work confronts this limitation by reframing TSP as a generative task to produce the whole set of missing triples in one pass.

\subsection{Generative Diffusion Models for Graph Generation}
Diffusion models~\cite{ho2020ddpm} are a type of powerful deep generative models, achieving excellent performance in image~\cite{rombach2022latent} and language generation~\cite{nie2025llada,arriola2025block}. They progressively add noise to data in a forward process, and then learn a neural network to reverse this process, i.e., starting from pure noise to generate new data samples.
Applying diffusion models to discrete and structured data like graphs presents unique challenges. Some recent studies~\cite{jo2022gdoss} have made significant progress in this direction.
DiGress~\cite{vignac2022digress} defines a new discrete diffusion process over graphs. GraphDiT~\cite{liu2024graphdit} further advanced this line of research by incorporating the Dit~\cite{peebles2023dit} architecture into the denoising network, demonstrating that attention-based mechanisms are crucial for modeling global graph properties in a diffusion model.
However, a limitation in all existing graph generation models is that they are designed for generating generic graphs rather than semantic knowledge graphs. 
They typically assume a small set of node types and edge types, and cannot effectively handle the heterogeneity of real-world KGs that feature hundreds of relation types and tens of thousands of entities. Adapting these generative models to produce semantically rich KG structures remains an open challenge. Our work advances this direction by tailoring a new diffusion paradigm specifically for completing large-scale heterogeneous KGs.
\section{Preliminaries}
Each KG is a directed multi-relational graph $G = (\mathcal V, \mathcal R, \mathcal T)$, where $\mathcal V$ is the set of entities, $\mathcal R$ is the set of relations, and $\mathcal T$ is the set of factual triples. Each triple $(h, r, t) \in \mathcal T$ consists of a head entity $h \in \mathcal V$, a relation $r \in \mathcal R$, and a tail entity $t \in \mathcal V$, representing a link from $h$ to $t$ via $r$. 
Formally, given an incomplete KG $G_{train} = (\mathcal V, \mathcal R, \mathcal T_{train})$ as input, the goal of the TSP task is to predict the set of missing triples $\mathcal T_{pred}$ ($\mathcal T_{pred} \cap \mathcal T_{train} = \emptyset$). We use $\mathcal T_{test}$ as the set of true triples in the test set to evaluate the model. For clarity and ease of reference, we summarize the key mathematical notations used throughout this paper in Table~\ref{tab:notations}.
\begin{table}[htbp]
  \caption{Summary of Notations}
  \label{tab:notations}
  \centering
  \footnotesize 
  \renewcommand{\arraystretch}{1.1} 
  \begin{tabularx}{\columnwidth}{l|X} 
    \toprule
    \textbf{Symbol} & \textbf{Description} \\
    \midrule
    \multicolumn{2}{c}{\textit{KG Basics}} \\
    \midrule
    $\mathcal{G} = (\mathcal{V}, \mathcal{R}, \mathcal{T})$ & KG with entities $\mathcal{V}$, relations $\mathcal{R}$, and triples $\mathcal{T}$ \\
    $b$ & Number of relation types  \\
    $\mathcal{T}_{train}$, $\mathcal{T}_{pred}$ & Observed triples, and predicted triples  \\
    $G^s, G^q$ & Support graph, and query graph  \\
    $\mathcal{T}^s, \mathcal{T}^q$ & Triple sets in the support and query graphs \\
    $\rho, N_S$ & Ratio for support/query split, and repeat times \\
    \midrule
    \multicolumn{2}{c}{\textit{Diffusion Model}} \\
    \midrule
    $T$, $t$ & Total diffusion steps and current timestep $t$ \\
    $G_t^q, E^q$ & Noisy query graph at step $t$, and  adjacency tensor \\
    $e_{ijk}^t$ & State of edge between entities $i, j$ with relation $k$ at step $t$ \\
    $\text{M}$ & Mask state representing the absence of an edge \\
    $Q_t, \overline{Q}_t$ & Transition matrices for the  forward  process \\
    $\alpha_t$ & Noise schedule parameter at timestep $t$ \\
    $p_\theta(G_{t-1}^q | G_t^q, G^s)$ & The learnable reverse transition distribution \\
    $q(G_t^q | G_{t-1}^q)$ & The forward diffusion transition distribution \\
    $f_\theta(\cdot)$ & The structure-aware denoising network \\
    $p_{\theta, ijk}^E$ & Predicted edge existence probability by $f_\theta(\cdot)$ \\
    $N_{Dit}$ & Number of RelDiT blocks \\
    $\mathbf{h}_i, \mathbf{R}_r$ & Hidden state vector for entity $i$, and relation $r$ \\
    $\mathcal{L}_{VLB}, \mathcal{L}_{simple}$ & Variational Lower Bound and the simplified training loss \\
    \bottomrule
  \end{tabularx}
\end{table}
\section{Methodology}
\begin{figure*}
    \centering
    \includegraphics[width=0.8\linewidth]{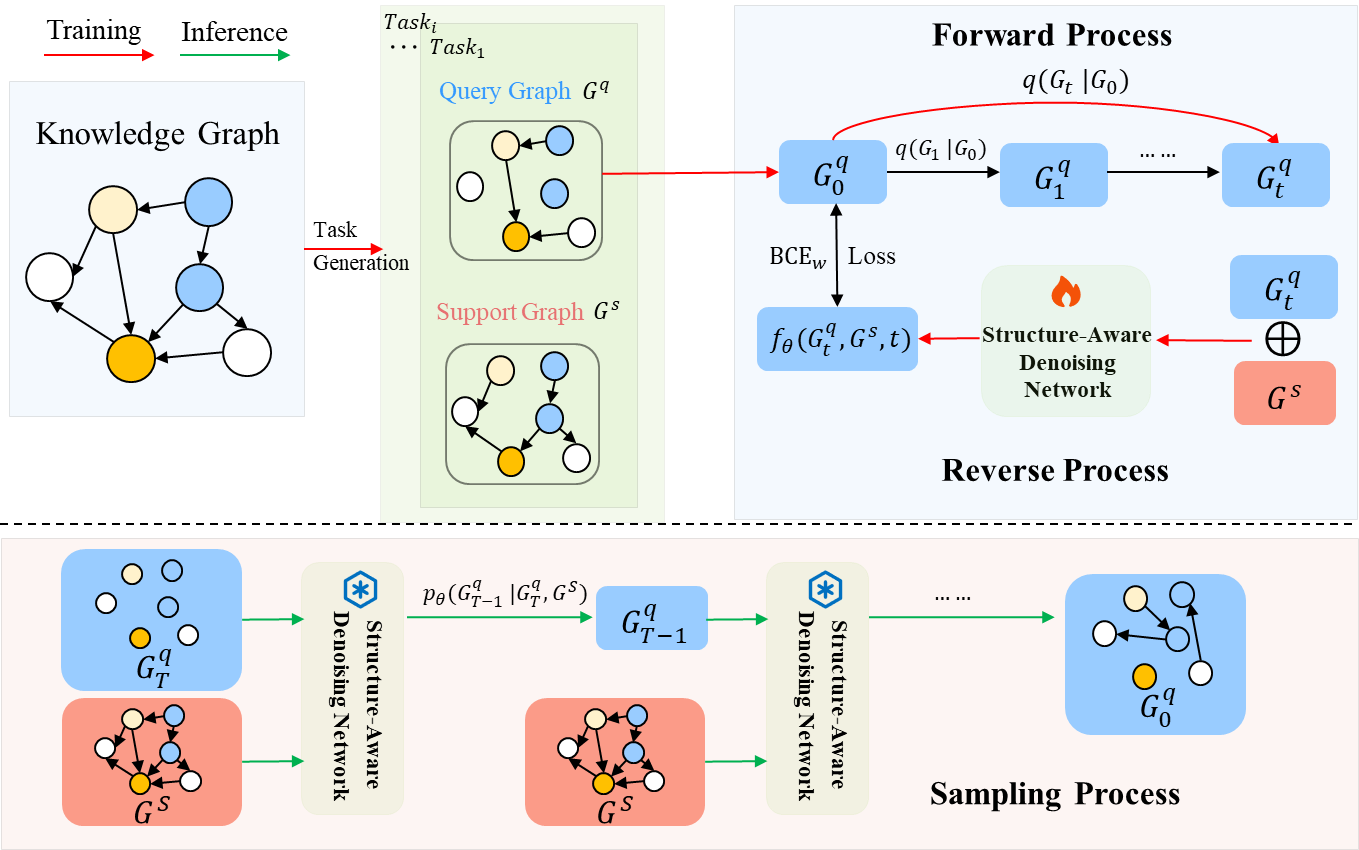}
    \caption{The training process and sampling process of \M~. }
    \label{fig:model}
\end{figure*}
\subsection{Overview of DiffTSP}
As depicted in Figure~\ref{fig:model}, the training process of \M~ involves a relation-balanced learning task generation to create meta-learning tasks, each with a support graph and a query graph. Then, DiffTSP corrupts the query graph by masking relational edges in the forward diffusion process. After that, a structure-aware denoising network is optimized via a specific training objective to reverse this corruption. Finally, during the sampling process, \M~uses the trained network to sample the final triple set. We introduce these modules in detail below.

\subsection{Relation-Balanced Learning Task Generation}
We employ a meta-learning framework to explicitly train \M~to generate query triple sets from support sets. 
For the given graph $G = (\mathcal V_{}, \mathcal R_{}, \mathcal T_{})$, multiple training tasks are generated. For each task, the set of triples $\mathcal T_{}$ is partitioned into a support set $\mathcal T^s$ and a query set $\mathcal T^q$. We use parameter $\rho \in (0,1)$ to specify the proportion of triples allocated to $\mathcal T^s$. This yields a support graph $G^s = (\mathcal V_{}, \mathcal R_{}, \mathcal T^s)$ and a query graph $G^q = (\mathcal V_{}, \mathcal R_{}, \mathcal T^q)$.
To mitigate the bias towards frequently occurring relations and ensure balance, we design a relation-balanced split strategy. It ensures that the probability distribution of relations in $\mathcal{T}^s$ and $\mathcal T^q$ mirrors that of the original graph $\mathcal T_{}$.
Let $P(r | \mathcal T)$ represent the probability of a relation $r$ in a set of triples $\mathcal T$. For any relation $r \in \mathcal R_{}$, we have:
\begin{equation}
    P(r | \mathcal T^s) \approx P(r | \mathcal T^q) \approx P(r | \mathcal T).
\end{equation}
This process is repeated $N_s$ times for $G_{}$ to generate a collection of $(G^s, G^q)$ pairs. This ensures that \M~is trained in a variety of combinations of support and query graphs, thereby achieving robust graph generalization. 

\subsection{Forward Diffusion Process}
We add structured noise into the query graph $G^q$ and subsequently perform denoising to recover the original structure. The support graph $G^s$ serves as a condition during the denoising phase. 
For clarity in the following discussion, we denote the relations among $n$ entities as an adjacency tensor $E^q \in \mathbb{R}^{n \times n \times b \times 2}$, where $b$ is the number of relation types. For each relation type $k$ between entities $i$ and $j$, the edge $E^q_{ijk}$ is a one-hot vector that indicates its existence ($[1,0]$) or absence ($[0,1]$). 

The goal of the TSP task is to complete the KG by inferring missing edges between a fixed set of entities. This allows the diffusion process on the graph $G^q$ to be simplified to operate directly on its adjacency tensor $E^q$. The assumption of fixed entities is reasonable~\cite{vignac2022digress}. If an entity is altered or deleted, any model-generated edges connected to it would also become invalid.
The forward diffusion process is a fixed Markov chain that incrementally adds noise to the adjacency tensor $E^q_0$($=E^q$). The transition at each step $t$ is defined by a categorical distribution (Cat):

\begin{small}
    \begin{equation}
    q(G^q_t | G^q_{t-1})=q(E^q_t | E^q_{t-1}) = \text{Cat}(E^q_t; E^q_{t-1} Q_t),
\end{equation}
\end{small}

\noindent Here, $E_t^q$ is the noisy adjacency tensor at timestep $t$. The operation $E_{t-1}^q Q_t$ involves an independent matrix multiplication for each potential edge to update its state probability. 
$Q_t \in \mathbb{R}^{2 \times2}$ defines the transition probabilities of a relational edge from timestep $t-1$ to timestep $t$. The entry $[Q_t]_{i,j}$ specifies the probability of transitioning from state $i$ to $j$ for a specific edge. Denoting the absence of an edge as state M, the transition matrix $Q_t$ is defined as:

\begin{small}
\begin{equation}
\label{eq:qitem}
    [Q_t]_{i,j} = \begin{cases} 1 & \text{if } i = j = \text{M}, \\ 1-\alpha_{t|t-1} & \text{if } j = \text{M}, i \neq \text{M}, \\ \alpha_{t|t-1} & \text{if } i = j \neq \text{M}, \\ 0 & \text{otherwise}. \end{cases}
\end{equation}
\end{small}

\noindent where the noise schedule parameter $\alpha_{t|t-1} \in (0, 1)$ is a function decreasing with $t$. This formulation ensures that, at each step, a relational edge either remains its current state (with probability $\alpha_{t|t-1}$) or transitions to the absence state (with probability $1-\alpha_{t|t-1}$).
Actually, we can sample $E_t^q$ directly from $E_0^q$:

\begin{small}
\begin{equation}
    q(G_t^q|G_0^q) = \text{Cat}(E_t^q; E_0^q \bar{Q}_t), \quad \text{with} \quad \bar{Q}_t = \prod_{i=1}^{t} Q_i
\end{equation}
\end{small}

\begin{small}
\begin{equation}
    [\bar{Q}_t]_{i,j} = \begin{cases} 1 & \text{if } i = j = \text{M}, \\ 1-\alpha_{t} & \text{if } j = \text{M}, i \neq \text{M}, \\ \alpha_{t} & \text{if } i = j \neq \text{M}, \\ 0 & \text{otherwise}. \end{cases}
\end{equation}
\end{small}

\noindent where $\alpha_t=\prod_{i=1}^t \alpha_{i|i-1}$.
When $t$ approaches infinity, $G^q_t$ will only have entity nodes and no edges. 

\subsection{Structure-Aware Denoising Network}
Our structure-aware denoising network, denoted as $f_\theta(G_t^q, G^s, t)$, takes the noisy query graph $G_t^q$, the support graph $G^s$, and the timestep $t$ as input. As depicted in Figure~\ref{fig:denoise}, to learn the inter-triple dependencies effectively, our denoising network consists of a relational context encoder and a relational graph diffusion transformer, which capture local relational structures and global structural dependencies, respectively.
To effectively leverage the structural information from the support graph, we fuse $G^s$ with the query graph $G_t^q$. 
The fusion is performed by taking the union of their relational edge sets. In this case, an edge is included in the fused graph if it appears in either $G^s$ or $G_t^q$. The fused graph is then used as the input of the denoising network.
\begin{figure}
    \centering
    \includegraphics[width=1\linewidth]{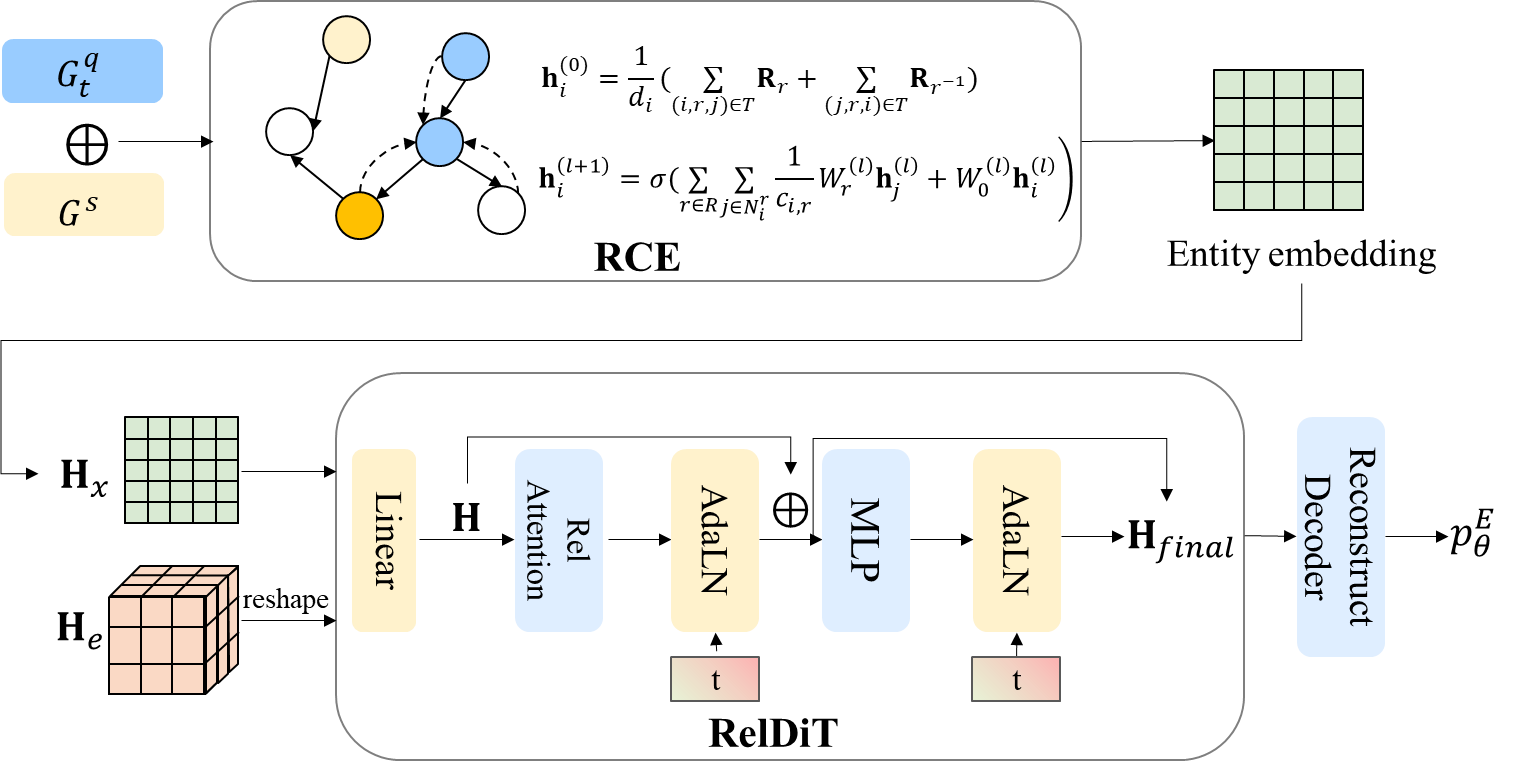}
    \caption{Overview of the structure-aware denoising network.}
    \label{fig:denoise}
\end{figure}
In the RCE module, we maintain a learnable relation embedding matrix $\mathbf{R} \in \mathbb{R}^{2n_r \times a}$, where $n_r$ is the number of relation types and $a$ is the embedding dimension. This matrix stores embeddings for each relation type and its corresponding inverse, allowing the model to differentiate edge directions. The initial feature vector $\mathbf{h}_i^{(0)}$ for an entity $i$ is computed by averaging the embeddings of all its adjacent relations:

\begin{small}
\begin{equation}
 \mathbf{h}_i^{(0)} = \frac{1}{d_i} \left( \sum_{(i, r, j) \in {\mathcal T}} \mathbf{R}_r + \sum_{(j, r, i) \in {\mathcal T}} \mathbf{R}_{r^{-1}} \right).
\end{equation}
\end{small}

\noindent Here, $\mathcal{T}$ represents the triples in the graph, $d_i$ is the total degree of entity $i$, and $\mathbf{R}_r$ and $\mathbf{R}_{r^{-1}}$ are the learnable embeddings for relation $r$ and its inverse, respectively. 
The entity representation $\mathbf{h}_i^{(l+1)}$ for entity $i$ at layer $l+1$ is updated as below.

\begin{small}
\begin{equation}
    \mathbf{h}_i^{(l+1)} = \sigma \left( \sum_{r \in R} \sum_{j \in N_i^r} \frac{1}{c_{i,r}} W_r^{(l)} \mathbf{h}_j^{(l)} + W_0^{(l)} \mathbf{h}_i^{(l)} \right),
\end{equation}
\end{small}

\noindent where $\sigma$ is an activation function, $N_i^r$ is the set of neighbors of entity $i$ under relation $r$, $W_r^{(l)}$ and $W_0^{(l)}$ are learnable weights at layer $l$, and $c_{i,r}$ is a normalization constant.

In the RelDiT module, the entity features $\mathbf{H}_x \in \mathbb{R}^{n\times a}$ are initialized using the final entity representations from the RCE, and the edge features $\mathbf{H}_e \in \mathbb{R}^{n\times n \times b}$ are initialized using the multi-hot encoding from the fused graph. These features are fused through a linear projection, i.e., $\mathbf{H}^{(0)} = \text{Linear}(\mathbf{H}_x, \mathbf{H}_e).$
Following~\cite{liu2024graphdit}, the scalar timestep $t$ is encoded by a sinusoidal embedding $\tau$. In each RelDiT block, the time embedding is injected via adaptive layer normalization (adaLN). Specifically, the $k$-th block is defined as
    
\begin{equation}
\begin{split}
\mathbf H^{(k, \text{attn})} &= \mathbf H^{(k-1)} + \text{AdaLN}(\text{RelAttn}(\mathbf H^{(k-1)}), \tau),\\
\mathbf H^{(k)} &= \mathbf H^{(k, \text{attn})} + \text{AdaLN}(\text{MLP}(\mathbf H^{(k, \text{attn})}), \tau).
\end{split}
\end{equation}
The attention layer RelAttn($\cdot$) operates on the representation $\mathbf H^{(k-1)}$. Let $\mathbf h_i$ and $\mathbf h_j$ denote the representations of entities $i$ and $j$ in $\mathbf H^{(k-1)}$. Unlike standard graph transformers that treat relations only as edge channels, we incorporate relational information into the attention computation through a relation-aware attention bias. Formally, the attention score between entity $i$ and entity $j$ is computed as

\begin{small}
\begin{equation}
\text{RelAttn}(i, j) =\text{Softmax}\left(\frac{(\mathbf h_i \mathbf W_Q)(\mathbf h_j \mathbf W_K)^{T}}{\sqrt{d}}+\mathbf B_{ij}\right),
\end{equation}
\end{small}

\noindent where $\mathbf B_{ij}$ denotes the relation-aware attention bias that reflects the relational dependency between the two entities. Specifically, the bias is defined as $\mathbf B_{ij} =\sum_{k=1}^{b} P_t(i,k,j)\, \mathbf r_k,$ where $P_t(i,k,j) \in [0,1]$ represents the probability that relation type $k$ exists between entities $i$ and $j$ at timestep $t$, $b$ is the total number of relation types, and $\mathbf r_k$ is a learnable embedding associated with relation $k$. 
After stacking multiple RelDiT blocks, the final representation $\mathbf H_{\text{final}}$ provides a purified representation of the latent graph structure. We then apply a reconstruction decoder to predict the probabilities of the clean relational adjacency tensor: $p_\theta^E \in \mathbb{R}^{n \times n \times b}=\text{Linear}(\text{MLP}(\mathbf H_{\text{final}})).$

\subsection{Reverse Process} 
The structure-aware denoising network $f_\theta$ is trained to reverse the forward diffusion process, i.e., predicting the clean graph $G_0^q$ from its noisy version $G_t^q$ under the condition $G^s$. 
Formally, this corresponds to learning the conditional reverse process $p_\theta(G_{t-1}^q \mid G_t^q, G^s)$ such that samples drawn from the reverse diffusion chain recover the data distribution.

\paragraph{Variational formulation of the reverse process.}
To learn the reverse process, we aim to maximize the conditional log-likelihood of the data, $\log p_{\theta}(G_0^q | G^s)$. However, directly optimizing this likelihood is intractable due to the marginalization over all latent diffusion states. Following the standard diffusion formulation~\cite{ho2020ddpm} and inspired by recent works~\cite{sohn2015condition,vignac2022digress,ko2024stochasticdvlb}, we instead optimize a variational lower bound (VLB) of the log-likelihood.
Specifically, the objective is derived as follows:

\begin{align}
\small
    &\log p_{\theta}(G_0^q | G^s) 
    = \log \int p_{\theta}(G_{0:T}^q | G^s)\, dG_{1:T}^q \label{eq1} \\
    &= \log \int q(G_{1:T}^q | G_0^q)\,
    \frac{p_{\theta}(G_{0:T}^q | G^s)}{q(G_{1:T}^q | G_0^q)}\, dG_{1:T}^q \label{eq2} \\
    &\geq \mathbb{E}_{q(G_{1:T}^q | G_0^q)} 
    \left[ 
    \log \frac{p_{\theta}(G_{0:T}^q | G^s)}{q(G_{1:T}^q | G_0^q)} 
    \right] 
    \label{eq}
    \quad \text{(denoted as $\mathcal{L}$)} 
\end{align}

The inequality follows from Jensen's inequality and yields a variational lower bound on the log-likelihood.

\paragraph{Decomposition of the variational lower bound.}
Taking expectation over the data distribution, the variational objective becomes

\begin{small}
\begin{align}
    &\mathbb{E}_{q}[\mathcal{L}] 
    = \mathbb{E}_q \left[ \log p_{\theta}(G_{0:T}^q | G^s) - \log q(G_{1:T}^q | G_0^q) \right] \label{eq:loss1}.
\end{align}
\end{small}

Using the Markov factorization of the reverse and forward diffusion processes, and Bayes' rule to each forward transition term $q(G_t^q | G_{t-1}^q)$, we have

\begin{small}
\begin{align}
    \mathbb{E}_{q}[\mathcal{L}]
    &= \mathbb{E}_q \left[ \log p(G_T^q | G^s) + \sum_{t=1}^T \log p_{\theta}(G_{t-1}^q | G_t^q, G^s) \right. \nonumber\\
    & \quad \left. - \sum_{t=1}^T \log q(G_t^q | G_{t-1}^q) \right] \label{eq:loss2} \\
    &= \mathbb{E}_q \left[ \log p(G_T^q | G^s) + \sum_{t=1}^T \log p_{\theta}(G_{t-1}^q | G_t^q, G^s) \right.  \nonumber \\
    &\quad \left. - \sum_{t=1}^T \log q(G_{t-1}^q | G_t^q, G_0^q) - \log q(G_T^q | G_0^q) \right]\label{eq:loss3} \\
    &= \mathbb{E}_q \left[ \log p_{\theta}(G_0^q | G_1^q, G^s) \right] \nonumber \\
    &\quad + \mathbb{E}_q \left[ \sum_{t=2}^T \left( \log p_{\theta}(G_{t-1}^q | G_t^q, G^s) - \log q(G_{t-1}^q | G_t^q, G_0^q) \right) \right] \nonumber \\
    &\quad + \mathbb{E}_q \left[ \log p(G_T^q | G^s) - \log q(G_T^q | G_0^q) \right] \label{eq:loss4} \\
    &= \mathbb{E}_{q}[\log p_\theta(G_0^q|{G}_1^q,G^s)] \nonumber\\  
    &\quad - \sum_{t=2}^{T} \mathbb{E}_{q} [D_{KL}(q(G_{t-1}^q|G_t^q, G_0^q) || p_\theta(G_{t-1}^q|G_t^q,G^s))] \nonumber \\ 
    &\quad - D_{KL}(q(G_T^q|G_0^q) || p(G_T^q|G^s)) \label{eq:loss5}
\end{align}
\end{small}

Consequently, the negative log-likelihood is upper bounded by the negative variational lower bound:
\begin{align}
   -\mathbb{E}_{q}[\log p_{\theta}(G_0^q | G^s)] 
   \le -\mathbb{E}_{q}[\mathcal{L}]
   =: \mathcal{L}_{VLB}.
\end{align}

\paragraph{From VLB to the simplified training objective.}
We show how $\mathcal{L}_{VLB}$ can be reduced to the simplified objective $\mathcal{L}_{simple}$ used in practice.
First, we denote
\begin{equation}
    L_T = D_{KL}(q(G_T^q | G_0^q) \,\|\, p(G_T^q | G^s)).
\end{equation}

Since $q(G_T^q | G_0^q)$ converges to a standard prior independent of $G_0^q$, and $p(G_T^q | G^s)$ is set to the same prior, $L_T$ is approximately constant and does not depend on the model parameters $f_\theta$. Therefore, it can be safely ignored during training.
Next, for each $t \in [2, T]$, we define

\begin{small}
 \begin{equation}
L_{t-1} 
= \mathbb{E}_{q} 
\left[
D_{KL}\!\left(
q(G_{t-1}^q | G_t^q, G_0^q)
\,\|\, 
p_{\theta}(G_{t-1}^q | G_t^q, G^s)
\right)
\right].
\end{equation}
   
\end{small}

As shown in~\cite{ho2020ddpm}, when the reverse model $f_{\theta}$ is parameterized to predict the original data, minimizing this KL divergence is equivalent to minimizing the discrepancy between the predicted graph and the ground-truth graph $G_0^q$, up to constant weighting coefficients.
For relational edges with binary states, this discrepancy is naturally measured using Binary Cross-Entropy (BCE). Thus, each KL term can be approximated as

\begin{small}
\begin{equation}
L_{t-1} 
\approx 
\mathbb{E}_{q} 
\left[
\text{BCE}(G_0^q, f_{\theta}(G_t^q, G^s, t))
\right].
\end{equation}
\end{small}

Finally, we denote
\begin{equation}
L_0 = -\mathbb{E}_{q} [\log p_{\theta}(G_0^q | G_1^q, G^s)].
\end{equation}
This term corresponds to a reconstruction loss measuring the ability of the model to recover $G_0^q$ from $G_1^q$. Since $f_{\theta}$ directly predicts $G_0^q$, this term also reduces to a BCE loss:

\begin{small}
\begin{equation}
    L_0 
    = \mathbb{E}_{q} 
    \left[
    \text{BCE}(G_0^q, f_{\theta}(G_1^q, G^s, 1))
    \right].
\end{equation}
\end{small}

To handle the sparse and imbalanced relation types of KGs, we treat non-existent edges between two entities as an additional relation type. The non-existent edge type is never noised in the forward diffusion process because there is nothing to mask. We design a weighted Binary Cross-Entropy ($\text{BCE}_w$) loss function to assign weights to each relation type based on its inverse frequency, thus effectively mitigating the imbalance caused by the abundance of non-existent edges and the varying counts of other relations.
The final training objective is to optimize the $\text{BCE}_w$ loss between the true adjacency tensor of $G_0^q$ and the prediction by our model $f_\theta(G_t^q,G^s,t)$, i.e.,
By aggregating all terms from $t=1$ to $T$ as in DDPM~\cite{ho2020ddpm}, we obtain the simplified training objective
\begin{equation}
    \mathcal{L}_{simple}
    = \mathbb{E}_{t, (G_0^q, G^s), G_t^q} 
    \left[
    \text{BCE}_w(G_0^q, f_{\theta}(G_t^q, G^s, t))
    \right].
\end{equation}
\noindent where $e_{ijk} \in \{0,1\}$ is the ground-truth value of the edge of relation type $k$ between entities $i$ and $j$. We denote ${p}_{\theta}^E=f_\theta(G_t^q,G^s,t)\in \mathbb{R}^{n \times n \times b}$, and ${p}_{\theta,ijk}^E \in [0,1]$ is the predicted existence probability for edge $(i,j,k)$. In practice, to ensure that the model learns to generate unknown relational edges in $G_0^q$ rather than the known edges in $G^s$ and $G_t^q$, we exclude the relational edges in $G^s$ and $G_t^q$ from the loss computation.

\subsection{Sampling Process}

Although the denoising network is trained to directly predict the final graph $G_0^q$, the sampling process remains iterative, spanning $T$ steps as illustrated in Figure~\ref{fig:model}. The process is initialized with two inputs, i.e., the support graph $G^S$ from the training data, and the noisy query graph $G_T^q$ that contains the same set of entities as $G^S$ but has no edges. Assuming that the transitions of each relational edge in $G^q$ are conditionally independent given the state at time $t$, as formulated in standard discrete diffusion models~\cite{austin2021d3pm,vignac2022digress}, the probability of graph $G^q_{t-1}$ given $G^q_{t}$ and $G^S$ can be expressed as a product over all possible edges:

\begin{small}
\begin{equation}
    p_\theta(G_{t-1}^q | G_t^q,G^S)= \prod_{ i, j ,k} p_\theta(e_{ijk}^{t-1} | G_t^q,G^S),
\end{equation}
\end{small}

\noindent where $p_\theta(e_{ijk}^{t-1} | G_t^q,G^S)$ represents the reverse transition probability for the edge of relation type $k$ between entities $i$ and $j$. To derive this probability, we first consider the true posterior distribution conditioned on the initial state. By Bayes' rule, we have (for brevity, we define $E_t=E_t^q$):

\begin{equation}
\label{eq:posterior_bayes}
\begin{split}
    q(E_{t-1}|E_t, E_0) & =\frac{q(E_t|E_{t-1})q(E_{t-1}|E_0)}{q(E_t|E_0)}\\
    &= \text{Cat}\left(E_{t-1}; p = \frac{E_t Q_t^\top \odot E_0 \overline{Q}_{t-1}}{E_0 \overline{Q}_t E_t^\top}\right).
\end{split}
\end{equation}

\noindent Substituting the formulation for $Q$ into Eq.~(\ref{eq:posterior_bayes}), we obtain the specific transition probabilities for an edge state $e_t$:

\begin{equation}
    \begin{split}
        &q(e_{t-1}=1 | e_t=\text{M}, e_0=1) = \frac{\alpha_{t-1}  (1 - \alpha_{t|t-1})}{1 - \alpha_t} = \frac{\alpha_{t-1} - \alpha_{t}}{1 - \alpha_t}, \\
        &q(e_{t-1}=\text{M} | e_t=\text{M}, e_0=1) = \frac{1 - \alpha_{t-1}}{1 - \alpha_t},
    \end{split}
\end{equation}

\noindent where $e_{t}$ denotes the state of an edge in the timestep $t$. In our practical implementation, $e_t=1$ means the existence of the edge, while $e_t=\text{M}=0$ indicates its absence. This is equivalent to the one-hot encoding of \text{[1,0]} for existence and \text{[0,1]} for absence, as described in the main text.

During the sampling process, the initial state $e_{ijk}^0$ is unknown. Therefore, to sample \( e^{t-1}_{ijk} \) at timestep $t-1$, we compute the posterior probability $p_{\theta}(e_{ijk}^{t-1}|G_t, G^S)$ by marginalizing over the unknown initial state using the denoising network's prediction. The network outputs the predicted probability of existence ${p}_{\theta,ijk}^E \in [0,1]$. The final transition probability is thus a weighted average of the tractable posteriors for the two hypotheses ($e_{ijk}^0=1$ and $e_{ijk}^0=0$):

\begin{small}  
\begin{align}
    p_{\theta}({e}_{ijk}^{t-1}|G_t,G^S) &= q(e_{ijk}^{t-1}| e_{ijk}^t,{e}_{ijk}^0 = 1) {p}^{{E}}_{\theta,ijk} \nonumber \\
    &\quad +  q(e_{ijk}^{t-1}| e_{ijk}^t,{e}_{ijk}^0 = 0) (1-{p}^{{E}}_{\theta,ijk}) \nonumber \\
    &=\begin{cases} 
1, & e^{t}_{ijk} \neq \text{M},\, e^{t-1}_{ijk} = e^{t}_{ijk}, \\
1-\frac{\alpha_{t-1}-\alpha_{t}}{1-\alpha_{t}}{p}_{\theta,ijk}^E, & e^{t}_{ijk} = \text{M},\, e^{t-1}_{ijk} = \text{M}, \\ 
\frac{\alpha_{t-1} -\alpha_{t}}{1-\alpha_{t}} {p}_{\theta,ijk}^E, & e^{t}_{ijk} = \text{M},\, e^{t-1}_{ijk} \neq \text{M}, \\
0, & \text{otherwise}.
\end{cases}
\end{align}
\end{small}

\noindent This distribution is used to sample a discrete graph $G_{t-1}^q$, serving as the input for the next denoising step. We ensure the sampling process is constrained to generate relational edges that are not present in the support graph $G^S$.


\subsection{Training and Sampling Algorithms}
\label{sec:algorithms}
The overall diffusion training and sampling processes are shown in Algorithm~\ref{alg:difftraining} and Algorithm~\ref{alg:diffsample}, respectively.
Since we treat the non-existent edge as an additional relation type during training, for the probability distribution $p_{\theta,ij} ^E \in \mathbb{R}^b$ over relations between entities $i$ and $j$, we first check if the maximum probability corresponds to the ``non-existent" type. If it does, we set the other $b-1$ relation types to zero. Otherwise, following LLaDA~\cite{nie2025llada}, we proceed with the sampling process based on the posterior distribution. 
In the experiments, we have sampling threshold $\gamma=0.999$ and sampling steps $T=20$.

\begin{algorithm}
\caption{Training Process of DiffTSP}
\textbf{Input:} A query graph $G^q = ({X^q}, {E^q})$ and a support graph $G^s$ \Comment{$X^q$ is entity feature, $E^q$ is adjacency tensor}
\begin{algorithmic}[1]
    \State Sample $t \sim \mathcal{U}(1, \dots, T)$ 
    \State Sample $G^q_t \sim {E^q}\bar{Q}^t$ \Comment{Sample a noisy graph}
    \State $ {p}^{E}_\theta \gets f_{\theta}(G^q_t, G^s,t)$ 
    \State optimizer.step($\mathcal{L}_{simple}( G^q,{p}^{E}_\theta)$) \Comment{$\text{BCE}_w$ loss}
\end{algorithmic}
\label{alg:difftraining}
\end{algorithm}

\begin{algorithm}
\caption{Sampling Process of DiffTSP}
\textbf{Input:} A graph $G^{S}$ from the training data, sampling threshold $\gamma$, sampling steps $T$
\begin{algorithmic}[1]
    \State Sample $G_T^q=(\mathcal V^S, \mathcal R^S, \emptyset)$ \Comment{$G_T^q$ consists of the same entities as $G^S$ but with no edges}
    \For{$t = T$ \textbf{to} $1$ \textbf{step} $1$}
        \State ${p}^{E}_\theta \gets f_{\theta}(G_t^q, G^S,t)$ 
        \For{$(i,j)= (1,1)$ \textbf{to} $(n,n)$} 
        \If{$\arg\max_{k' \in \{1,\dots,b\}} p_{\theta,ijk'}^E = b$} 
            \State $e_{ij[1:b-1]}^{t-1} \leftarrow \mathbf{0}$ 
            \Comment{If the last dim (``non-existent edge" type) has max probability, set $b-1$ relation types to 0}
        \Else \Comment{Sampling for $b-1$ relation types}
            \For{$k=1$ \textbf{to} $b-1$}
                \If{$e_{ijk}^t \neq 0$}
                    $e_{ijk}^{t-1} \leftarrow e_{ijk}^t$
                \Else 
                    \State With probability $\frac{1-\alpha_{t-1}}{1-\alpha_{t}}$, set $e_{ijk}^{t-1} \leftarrow 0$. With probability $\frac{\alpha_{t-1}-\alpha_{t}}{1-\alpha_{t}}$, set $e_{ijk}^{t-1} \leftarrow \mathbf{1}(p_{\theta,ijk}^E > \gamma)$.
                \EndIf
            \EndFor
        \EndIf
        \EndFor
    \EndFor
    \State \textbf{return} $G^0$
\end{algorithmic}
\label{alg:diffsample}
\end{algorithm}

\section{Theoretical Analysis: Modeling Inter-Triple Dependencies}
\label{sec:theoryproof}
Here, we provide a theoretical justification for why the diffusion model framework is inherently suited to capturing the inter-triple dependencies crucial for generating consistent knowledge graphs. Our argument is that by training the model to approximate the true reverse process, we implicitly train it to replicate the complex joint distribution of the data.

\begin{theorem}[Perfect Denoising Recovers the True Data Distribution]
\label{thm:perfect_denoising}
Let $p_{\mathrm{data}}(G_0)$ be the true distribution of the complete graph (represented by their adjacency tensors $E_0$). If the learned denoising network $p_\theta(G_{t-1} \mid G_t)$ perfectly matches the true reverse distribution $q(G_{t-1} \mid G_t)$ for all timesteps $t$, then the distribution of generated graphs $p_\theta(G_0)$ is identical to the true data distribution, i.e., $p_\theta(G_0) = p_{\mathrm{data}}(G_0)$.
\end{theorem}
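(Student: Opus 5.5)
The plan is to compare the two Markov chains that run from $G_T$ down to $G_0$: the generative chain $p_\theta$ and the time-reversal of the forward chain $q$. The forward process is a Markov chain started at $p_{\mathrm{data}}(G_0)$ with kernels $q(G_t\mid G_{t-1})$. By the Markov property and Bayes' rule, its joint law factors in reverse order as
\begin{equation*}
q(G_{0:T}) = q(G_T)\prod_{t=1}^{T} q(G_{t-1}\mid G_t),
\end{equation*}
where $q(G_T)$ is the marginal at time $T$ and $q(G_{t-1}\mid G_t)$ is the true reverse kernel. I will check that the Markov property of the forward chain makes the reversed process Markov as well. This is a standard fact: conditioning on $G_t$ renders $G_{t-1}$ independent of $G_{t+1:T}$.

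The generative model has the factorization $p_\theta(G_{0:T}) = p(G_T)\prod_{t=1}^{T} p_\theta(G_{t-1}\mid G_t)$. Under the hypothesis, $p_\theta(G_{t-1}\mid G_t)=q(G_{t-1}\mid G_t)$ for every $t$. The one remaining ingredient is that the initial laws agree, $p(G_T)=q(G_T)$. For the absorbing transition matrix $\bar Q_T$ with $\alpha_T\to 0$, $q(G_T\mid G_0)$ is the point mass on the empty (all-M) graph for every $G_0$. Hence $q(G_T)=\delta_{\emptyset}$, which is exactly the prior the sampler starts from in Algorithm~\ref{alg:diffsample}.

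Given matching initial laws and matching kernels, I will prove by downward induction on $t$ that the marginals satisfy $p_\theta(G_t)=q(G_t)$:
\begin{equation*}
p_\theta(G_{t-1})=\sum_{G_t} p_\theta(G_{t-1}\mid G_t)\,p_\theta(G_t)=\sum_{G_t} q(G_{t-1}\mid G_t)\,q(G_t)=q(G_{t-1}).
\end{equation*}
At $t=0$ this gives $p_\theta(G_0)=q(G_0)=p_{\mathrm{data}}(G_0)$. All sums are finite because the adjacency tensors live in $\{0,1\}^{n\times n\times b}$, so no measure-theoretic subtleties arise.

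The main obstacle is the terminal-distribution step. The identity $q(G_T)=p(G_T)$ holds exactly only if $\alpha_T=0$; with $\alpha_T>0$ it holds only approximately. I will state the assumption $\alpha_T=0$, that is, that the schedule fully masks the graph by step $T$, as part of reading the theorem.

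If $\alpha_T$ is merely small, I will instead bound the error. Because the same kernels are applied to both chains and Markov kernels contract total variation, the induction above yields
\begin{equation*}
\|p_\theta(G_0)-p_{\mathrm{data}}(G_0)\|_{TV}\le \|p(G_T)-q(G_T)\|_{TV}.
\end{equation*}
The right-hand side is at most $1-(1-\alpha_T)^{m}$, where $m$ is the number of edges. The conditional version with $G^s$ follows verbatim by conditioning every distribution on $G^s$.
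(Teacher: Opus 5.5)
Your proposal is correct and is essentially the paper's argument: the paper identifies the generative chain with the time reversal of the forward chain and then marginalizes out $G_{1:T}$, and your downward induction on marginals is that marginalization written out step by step. What you add is the step the paper uses silently, $p(G_T)=q(G_T)$. It holds exactly here, since the paper's linear schedule $\alpha_t = 1-t/T$ gives $\alpha_T=0$, so $q(G_T\mid G_0)$ is the point mass on the empty graph for every $G_0$; your TV-contraction bound covers the case $\alpha_T>0$, which the paper does not treat.
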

\begin{proof}
The proof is a standard result in diffusion literature. The generative process is defined by the Markov chain $p_\theta(G_{0:T}) = p(G_T) \prod_{t=1}^T p_\theta(G_{t-1} \mid G_t)$. If $p_\theta(G_{t-1} \mid G_t)$ perfectly mimics the true posterior $q(G_{t-1} \mid G_t)$, the joint distribution of the generative process matches that of the forward process in reverse. Marginalizing over the latent variables $G_{1:T}$ yields the desired result that the model's marginal $p_\theta(G_0)$ recovers the data distribution $p_{\mathrm{data}}(G_0)$.
\end{proof}
 
\noindent\textit{Implication for Dependencies:} Since $p_{\mathrm{data}}(G_0)$ contains only valid graphs from the training set, it implicitly encodes all inter-triple dependencies. By recovering this distribution, a perfect model would generate graphs with the same statistical and logical integrity. The following theorems show how our training objective works towards this ideal.
\begin{theorem}[Minimizing the VLB Controls Conflict Probability]
\label{thm:conflict_bound}
Let $\mathcal{C}$ be the set of all possible graphs that contain logical contradictions. Assume the true data is consistent, i.e., $P_{p_{\mathrm{data}}}(\mathcal{C}) = 0$. The probability of the model generating a contradictory graph is then upper-bounded by the total variation distance between the model and data distributions:
\begin{equation}
    P_{p_\theta}(\mathcal{C}) \le \mathrm{TV}(p_\theta, p_{\mathrm{data}}).
    \label{eq:tv}
\end{equation}
Furthermore, Pinsker's inequality relates this bound to the Kullback-Leibler (KL) divergence, which is minimized by the Variational Lower Bound (VLB) objective of our diffusion model:
\begin{equation}
    P_{p_\theta}(\mathcal{C}) \le \sqrt{\tfrac{1}{2} D_{\mathrm{KL}}(p_{\mathrm{data}} \,\|\, p_\theta)}.
    \label{eq:inckl}
\end{equation}
\end{theorem}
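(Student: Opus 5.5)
The plan is to prove the two inequalities in sequence. The first follows from the event characterization of total variation. The second follows by chaining the first with Pinsker's inequality. After that, I will add a short remark connecting the KL term to the VLB objective.

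For \eqref{eq:tv}, I will use the definition $\mathrm{TV}(p_\theta, p_{\mathrm{data}}) = \sup_{A} |P_{p_\theta}(A) - P_{p_{\mathrm{data}}}(A)|$. Here the supremum ranges over all measurable sets of graphs; since the adjacency tensors live in a finite space, every subset qualifies. Taking $A = \mathcal{C}$ and using the hypothesis $P_{p_{\mathrm{data}}}(\mathcal{C}) = 0$ gives
\[
P_{p_\theta}(\mathcal{C}) = P_{p_\theta}(\mathcal{C}) - P_{p_{\mathrm{data}}}(\mathcal{C}) \le \mathrm{TV}(p_\theta, p_{\mathrm{data}}).
\]
This is immediate, with one caveat. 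Some authors define TV as $\sum_x |p(x)-q(x)|$ without the factor $\tfrac12$. I will fix the convention with the factor $\tfrac12$, equivalently the supremum over events, because Pinsker's inequality in the form $\sqrt{\tfrac12 D_{\mathrm{KL}}}$ requires it.

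For \eqref{eq:inckl}, I will invoke Pinsker's inequality $\mathrm{TV}(P,Q) \le \sqrt{\tfrac12 D_{\mathrm{KL}}(P\|Q)}$, applied with $P = p_{\mathrm{data}}$ and $Q = p_\theta$. Total variation is symmetric, so the ordering inside TV does not matter. Combining this with \eqref{eq:tv} yields the stated bound.

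To justify the phrase ``minimized by the VLB'', I will write
\[
D_{\mathrm{KL}}(p_{\mathrm{data}}\|p_\theta) = \mathbb{E}_{p_{\mathrm{data}}}[\log p_{\mathrm{data}}] - \mathbb{E}_{p_{\mathrm{data}}}[\log p_\theta(G_0)].
\]
The first term is a constant independent of $\theta$. The second is the negative log-likelihood, which the earlier derivation bounds above by $\mathcal{L}_{VLB}$. Hence
\[
D_{\mathrm{KL}}(p_{\mathrm{data}}\|p_\theta) \le \mathcal{L}_{VLB} - H(p_{\mathrm{data}}),
\]
so minimizing $\mathcal{L}_{VLB}$ drives an upper bound on the KL term, and thus on $P_{p_\theta}(\mathcal{C})$. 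The conditioning on $G^s$ can be carried along throughout, since every step holds pointwise in $G^s$.

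The main obstacle is not the inequalities themselves, which are one line each. It is making precise what ``minimized by the VLB'' means: the VLB controls the KL only up to the entropy constant and the variational gap. I will therefore state that part as an upper bound relation rather than an equality, and keep the rigorous content of the theorem to \eqref{eq:tv} and \eqref{eq:inckl}.
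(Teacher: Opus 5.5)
Your proposal is correct and follows essentially the same route as the paper. Like the paper, you take $A=\mathcal{C}$ in the supremum definition of $\mathrm{TV}$ with $P_{p_{\mathrm{data}}}(\mathcal{C})=0$, apply Pinsker's inequality, and tie the bound to training via $D_{\mathrm{KL}}(p_{\mathrm{data}}\|p_\theta)=\mathbb{E}_{p_{\mathrm{data}}}[-\log p_\theta(G_0)]-H(p_{\mathrm{data}})\le \mathcal{L}_{VLB}-H(p_{\mathrm{data}})$, and your explicit choice of the $\tfrac12$-normalized $\mathrm{TV}$ (needed for the stated Pinsker constant) and your caveat that the VLB only upper-bounds the KL are precisions the paper leaves implicit.
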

\begin{proof}
By definition, the total variation distance is $\mathrm{TV}(p_\theta, p_{\mathrm{data}}) = \sup_{A} |P_{p_\theta}(A) - P_{p_{\mathrm{data}}}(A)|$. Letting $A = \mathcal{C}$ and using the assumption that $P_{p_{\mathrm{data}}}(\mathcal{C}) = 0$, we get $|P_{p_\theta}(\mathcal{C})| \le \mathrm{TV}(p_\theta, p_{\mathrm{data}})$, which gives the inequality in Eq.~(\ref{eq:tv}). The second inequality in Eq.~(\ref{eq:inckl}) is a direct application of Pinsker's inequality, noting that the VLB objective in our diffusion model is designed to minimize an upper bound on $D_{\mathrm{KL}}(p_{\mathrm{data}} \,\|\, p_\theta)$:
\begin{align}
D_{\mathrm{KL}}(p_{\mathrm{data}}\|p_\theta)
&= \mathbb{E}_{p_{\mathrm{data}}}\big[- \log p_\theta(G_0)\big] - H(p_{\mathrm{data}}) \\
&\le \mathcal{L}_{\mathrm{VLB}} - H(p_{\mathrm{data}}),
\end{align}
where $H(p_{\mathrm{data}})$ denotes the entropy of the data distribution (a constant).  
Combining the above, we obtain
\begin{equation}
P_{p_\theta}(\mathcal{C}) \;\le\; 
\sqrt{\tfrac{1}{2}\big(\mathcal{L}_{\mathrm{VLB}} - H(p_{\mathrm{data}})\big)}.
\end{equation}

\end{proof}

\noindent\textbf{Synthesis.}
These theorems provide a formal basis for our claim. Theorem~\ref{thm:perfect_denoising} shows that a perfect model recovers the data's inherent dependencies. Theorem~\ref{thm:conflict_bound} shows that minimizing the VLB provably tightens an upper bound on the probability of generating inconsistent triples. Therefore, by training our model to minimize the VLB loss, we are provably tightening the bound on generating contradictory or structurally incomplete triple sets, thus inherently promoting consistency.

\section{Experiment}
\subsection{Datasets and Evaluation Metrics}
To ensure the reproducibility and fairness of the evaluation, we conduct experiments on three widely recognized benchmark datasets: Wiki79k, Wiki143k, and CFamily, where Wiki79k and Wiki143k are used under the Relation Similarity-based Partial-Open-World Assumption (RS-POWA) and Closed-World Assumption (CWA), and CFamily is used under the Closed-World Assumption (CWA). It is worth noting that all these datasets, along with their ground truth, are directly sourced from GPHT~\cite{zhang2024starttsp}. We strictly adhere to the original data partitions and definitions provided by GPHT, which guarantees that our results are directly comparable with existing methods.
To make our approach tractable, we use the graph partitioning strategy from the recent work of GPHT~\cite{zhang2024starttsp} on the TSP task. This method divides the full knowledge graph into a set of smaller, overlapping subgraphs. Our model, \M, is then trained and evaluated on these subgraphs. The final set of predicted triples is the union of the predictions from all subgraphs. 

Table~\ref{tab:dataset_stats} shows the details of the three datasets. Wiki79k and Wiki143k are subsets extracted from Wikidata. As they are inherently incomplete, they are usually used to evaluate model performance under the RS-POWA, a realistic setting for real-world KGs. 
CFamily is a smaller and synthetically completed dataset focused on family relationships. Its relative completeness makes it ideal for evaluating model performance under the CWA, where any triples not present in the graph can be considered false.
\begin{table}[ht]
    \centering
      \caption{Statistics of datasets in experiments.}
      
    \label{tab:dataset_stats}
     \scalebox{1}{%
    \begin{tabular}{lccccc}
        \toprule
        Datasets & Entity & Relation & Train & Valid & Test \\
        \midrule
        Wiki79k & 7,983 & 85 & 57,033 & 6,337 & 15,843 \\
        Wiki143k & 13,928 & 109  & 103,415 & 14,190 & 28,727  \\
        CFamily & 2,378 & 12  & 16,549 & 1,839 & 4,598  \\
        \bottomrule
    \end{tabular}
    }
    
\end{table}

To ensure a fair and comprehensive comparison, we use the evaluation metrics, i.e., Joint Precision ($JPrecision$), Squared Test Recall ($STRecall$), and TSP Score ($F_{TSP}$), proposed for the TSP task in GPHT~\cite{zhang2024starttsp}, where $F_{TSP}$ is the harmonic mean of $JPrecision$ and $STRecall$. 
Higher values indicate better outcomes for these metrics.
Joint Precision ($JPrecision$) measures the precision of the predicted set while accounting for the size of the set itself. Squared Test Recall ($STRecall$) measures the percentage of test triples successfully recovered by the model. TSP Score ($F_{TSP}$) is the harmonic mean of $JPrecision$ and $STRecall$. 
In CWA, we have:

\begin{small}
\begin{align}
\mathcal {T}^{CWA+}_{{pred}} &= \mathcal{T}_{{pred}} \cap \mathcal {T}_{{test}}, \quad
\mathcal {T}^{CWA-}_{{pred}} = \mathcal {T}_{{pred}} - \mathcal {T}^{CWA+}_{{pred}}, \quad  \nonumber \\
\mathcal {T}^{CWA}_{{pred}} &= \mathcal {T}^{CWA+}_{{pred}} \cup \mathcal {T}^{CWA-}_{{pred}} = \mathcal {T}_{{pred}}
\end{align} 
\end{small}

In RS-POWA, we have:

\begin{small}
\begin{align}
    &\mathcal {T}^{POWA+}_{{pred}}= \mathcal {T}_{{test}} \cap \mathcal {T}_{{pred}}, \quad \mathcal {T}^{POWA}_{{pred}}= \mathcal {T}^{POWA+}_{{pred}} \cup \mathcal {T}^{POWA-}_{{pred}}\\
&\mathcal {T}^{POWA-}_{{pred}} = \{(h, r, t) | (h, r, t) \in \mathcal {T}_{{pred}}, (h, r, t) \notin \mathcal {T}_{{test}}, \nonumber \\
&\exists r' \in \mathcal{R} \ (h, r', t) \in (\mathcal {T}_{{train}} \cup \mathcal {T}_{{test}}) \ \wedge \ {sim}(r, r') < \theta\}
\end{align}
\end{small}

Finally, $JPrecision$, $STRecall$ and $F_{TSP}$ in RS-POWA and CWA are calculated as follows ($WA \in \{POWA, CWA\}$):

\begin{small}
\begin{align}
        JPrecision &= \frac{1}{2}\left(\frac{|\mathcal {T}^{WA+}_{{pred}}|}{|\mathcal {T}^{WA}_{{pred}}|} + \frac{|\mathcal {T}^{WA+}_{{pred}}|}{|\mathcal {T}_{{pred}}|}\right), 
        \quad \nonumber\\STRecall &= \left(\frac{|\mathcal {T}^{WA+}_{{pred}}|}{|\mathcal {T}_{{test}}|}\right)^{\frac{1}{2}}, \quad \nonumber\\
F_{TSP} &= \frac{2 \times (ST{Recall} \times J{Precision})}{ST{Recall} + J{Precision}}
\end{align}
\end{small}

\subsection{Settings}
The implementation is based on PyTorch and trained on one NVIDIA GeForce RTX 4090 GPU.
For our proposed model, hyperparameters are tuned on the validation set. We set the relation-balanced partitioning ratio $\rho=0.8$, repeat times $N_s=100$, embedding dimension $a=16$, sampling steps $T=20$, and the number of Dit blocks $N_{Dit}=3$. We apply a linear noise schedule function $\alpha_t=1-t/T$, and use the Adam optimizer with an initial learning rate of 1e-3. The model is trained for up to 50 epochs, with early stopping triggered if the $F_{TSP}$ score on the validation set does not improve for 10 consecutive epochs. 

\subsection{Baselines}
We select both link prediction methods and TSP methods as baselines. The representative link prediction methods, including TensorLog~\cite{cohen2017tensorlog}, HAKE~\cite{zhang2020hake}, PairRE~\cite{chao2020pairre}, RGCN~\cite{2018rgcn}, CompGCN~\cite{CompGCN}, AstarNet~\cite{DBLP:conf/nips/astarnet} and ULTRA~\cite{DBLP:conf/iclr/ultra}, are modified for TSP by scoring all candidate triples in the graph and selecting those above a threshold as the predicted results. The threshold is determined through grid search on the validation set to achieve the best performance. 
The latest TSP methods, such as GPHT~\cite{zhang2024starttsp} and LLMTSP~\cite{yuan2024largetsp}, are also included.
DiGress~\cite{vignac2022digress} and GraphDit~\cite{liu2024graphdit} are not selected since they are tailored to small graphs and simple conditioning, while our task involves large knowledge graphs and complex support-graph conditioning that they cannot accommodate.
\subsection{Main Results}

\begin{table*}[ht]
    \centering
    \caption{The results on Wiki79k and Wiki143k datasets, where $\mathcal T^{POWA+}_{pred}$ is the correct triple set in test data, and $\mathcal T^{POWA}_{pred}$ is the triple set that can be confidently judged under the RS-POWA assumption. The best results are in bold, and the second-best results are underlined. The results with \textsuperscript{†} are from GPHT~\cite{zhang2024starttsp}.}
    \label{tab:owa}
     \scalebox{1}{%
    \begin{tabular}{l|l|ccc|cc>{\columncolor{gray!10}}c|c>{\columncolor{gray!10}}c}
        \toprule
        Datasets & Models & \multicolumn{3}{c|}{Number of Triples in} & \multicolumn{3}{c|}{RS-POWA Metrics} &\multicolumn{2}{c}{CWA Metrics} \\ 
        \midrule
        & & ${\mathcal T}_{pred}$ & ${\mathcal T}^{POWA}_{pred}$ & ${\mathcal T}^{POWA+}_{pred}$ & $JPrecision$ & $STRecall$ & $F_{TSP}$  &$JPrecision$& $F_{TSP}$\\
        \midrule
        \multirow{7}{*}{Wiki79k} & TensorLog\textsuperscript{†} & 9188$\pm$0 & 1319$\pm$0 & 1292$\pm$0 & \underline{0.560$\pm$0} & 0.128$\pm$0 & 0.208$\pm$0 &0.141 &0.134 \\
        & HAKE\textsuperscript{†} & 125$\pm$87 & 119$\pm$83 & 30$\pm$7 & 0.246$\pm$14.8\% & 0.044$\pm$0.4\% & 0.075$\pm$0.3\%& \underline{0.240}&0.074  \\
        & PairRE\textsuperscript{†} & 29742$\pm$105 & 10217$\pm$51 & 2901$\pm$7 & 0.191$\pm$0.1\% & \underline{0.428$\pm$0.0\%} & 0.264$\pm$0.2\% & 0.097&0.158 \\
        & AstarNet & 5689$\pm287$ & 1750$\pm116$ & 1328$\pm52$ & 0.496$\pm$0.8\% & 0.289$\pm$1.1\% & \underline{0.365$\pm$0.2\%}& 0.233&\underline{0.258}  \\
        & ULTRA & 9743$\pm0$ & 2945$\pm0$ & 1062$\pm0$ & 0.235$\pm$0\% & 0.258$\pm$0\% & 0.246$\pm$0\%& 0.109&0.153  \\
         & GPHT(HAKE)\textsuperscript{†} & 209$\pm$47 & 191$\pm$47 & 44$\pm$8 & 0.220$\pm$8\% & 0.053$\pm$0.4\% & 0.085$\pm$1.1\% & 0.210&0.084 \\
        & GPHT(PairRE)\textsuperscript{†} & 12392$\pm$3813 & 5866$\pm$1262 & 2018$\pm$332 & \text{0.253$\pm$1.6\%} & 0.357$\pm$2.8\% & \text{0.296$\pm$0.4\%}& 0.162&0.224  \\
        \midrule
        & DiffTSP & 7472$\pm$103 & 4355$\pm$51 & 3472$\pm$87 & \textbf{0.630$\pm$1.2\%} & \textbf{0.468$\pm$0.4\%} & \textbf{0.537$\pm$0.5\%}& \textbf{0.464}&\textbf{0.466}  \\
        \midrule
        \multirow{7}{*}{Wiki143k} & TensorLog\textsuperscript{†}& 24392$\pm0$ & 2570$\pm0$ & 2299$\pm0$ & 0.494$\pm0$ & 0.127$\pm0$ & 0.201$\pm0$  & 0.094&0.108\\
        & HAKE\textsuperscript{†} & 22215$\pm$1283 & 6182$\pm$43 & 3044$\pm$72 & 0.315$\pm$0.2\% & \underline{0.326$\pm$0.3\%} & 0.32$\pm$0.3\%  & 0.137&0.193\\
        & PairRE\textsuperscript{†} & 19228$\pm$2075 & 3313$\pm$722 & 1191$\pm$77 & 0.211$\pm$3\% & 0.204$\pm$0.7\% & 0.207$\pm$1\% & 0.061&0.095\\
         & AstarNet & 6001$\pm231$ & 2589$\pm129$ & 2275$\pm31$ & \textbf{0.581$\pm$0.8\%} & 0.281$\pm$0.3\% & \underline{0.379$\pm$0.2\%}& 0.113&0.179  \\
        & ULTRA & 3844$\pm0$ & 1349$\pm0$ & 692$\pm0$ & 0.346$\pm$0\% & 0.155$\pm$0\% & 0.214$\pm$0\%& 0.180&0.166  \\
        & GPHT(HAKE)\textsuperscript{†} & 17702$\pm$7935 & 4709$\pm$60 & 2700$\pm$681 & 0.363$\pm$4.8\% & 0.307$\pm$4.2\% & \text{0.333$\pm$4.6\%}  & 0.152&0.204\\
        & GPHT(PairRE)\textsuperscript{†} & 3011$\pm$233 & 1954$\pm$191 & 909$\pm$34 & \text{0.384$\pm$1.9\%} & 0.178$\pm$0.3\% & 0.243$\pm$0.1\% & \underline{0.302}&\underline{0.224}\\
        \midrule
        & DiffTSP & 10162$\pm$530 & 6804$\pm$275 & 4543$\pm$105 & \underline{0.557$\pm$0.3\%} & \textbf{0.397$\pm$0.4\%} & \textbf{0.464$\pm$0.6\%}& \textbf{0.447}& \textbf{0.420} \\
        \bottomrule
    \end{tabular}}
\end{table*}
\begin{table*}[ht]
\caption{The results on CFamily dataset, where $\mathcal T^{CWA+}_{pred}$ is the correct triple set, and $\mathcal T^{CWA}_{pred}=\mathcal T_{pred}$. The results with \textsuperscript{†} are from GPHT~\cite{zhang2024starttsp} and the results with \textsuperscript{*} are from LLMTSP~\cite{yuan2024largetsp}.}
\label{tab:cwa}
\centering
\scalebox{1}{
\begin{tabular}{l|ccc|cc>{\columncolor{gray!10}}c}
\toprule
\multicolumn{1}{c|}{Models} & \multicolumn{3}{c|}{Number of Triples in} & \multicolumn{3}{c}{CWA Metrics} \\ \midrule
& $\mathcal T_{pred}$ & $\mathcal T_{pred}^{CWA}$ & $\mathcal T_{pred}^{CWA+}$ & $JPrecision$ & $STRecall$ & $F_{TSP}$ \\ \midrule
TensorLog\textsuperscript{†} & 911$\pm$0 & 911$\pm$0 & 572$\pm$0 & 0.628$\pm$0 & 0.158$\pm$0 & 0.252$\pm$0 \\
HAKE\textsuperscript{†} & 3186$\pm$543 & 3186$\pm$543 & 1788$\pm$321 & 0.561$\pm$1.8\% & \underline{0.624$\pm$5.9\% }& \underline{0.591$\pm$3.1\%} \\
PairRE\textsuperscript{†} & 5732$\pm$2062 & 5732$\pm$2062 & 1747$\pm$694 & 0.305$\pm$1.8\% & 0.616$\pm$13.7\% & 0.408$\pm$4.9\% \\
RGCN\textsuperscript{†} & 30608$\pm$3443 & 30608$\pm$3443 & 3026$\pm$137 & 0.093$\pm$6.0\% & \textbf{0.704$\pm$0.5\%} & 0.163$\pm$5.2\% \\
CompGCN\textsuperscript{†} & 38931$\pm$3745 & 38931$\pm$3745 & 2599$\pm$47 & 0.062$\pm$4.8\% & 0.598$\pm$0.3\% & 0.114$\pm$1.7\% \\ 
AstarNet & 2075$\pm$360 & 2075$\pm$360 & 911$\pm$187 & 0.439$\pm$1.7\% & 0.445$\pm$4.6\% & 0.442$\pm$3.3\% \\
ULTRA & 32761$\pm$0 & 32761$\pm$0 & 763$\pm$0 & 0.023$\pm$0\% & 0.407$\pm$0\% & 0.044$\pm$0\% \\
GPHT(HAKE)\textsuperscript{†} & 1896$\pm$149 & 1896$\pm$149 & 1222$\pm$58 & \underline{0.645$\pm$2.6\%} & 0.516$\pm$1.2\% & 0.573$\pm$0.4\% \\
GPHT(PairRE)\textsuperscript{†} & 3739$\pm$593 & 3739$\pm$593 & 1471$\pm$187 & 0.393$\pm$1.3\% & 0.566$\pm$3.4\% & 0.464$\pm$0.7\% \\
LLMTSP(GPT-3.5)\textsuperscript{*} & 3403$\pm$158 & 3403$\pm$158 & 96$\pm$12 & 0.028$\pm$0.3\% & 0.168$\pm$0.8\% & 0.049$\pm$0.4\% \\
LLMTSP(GPT-4o)\textsuperscript{*} & 1276$\pm$147 & 1276$\pm$147 &179$\pm$17 & 0.14$\pm$0.5\% & 0.374$\pm$0.7\% & 0.204$\pm$0.7\% \\
\midrule
DiffTSP & 2453$\pm$35 & 2453$\pm$35 & 1657$\pm$18 & \textbf{0.675$\pm$0.2\%} & 0.600$\pm$0.3\% & \textbf{0.635$\pm$0.2\%} \\ \bottomrule
\end{tabular}}

\end{table*}
According to the main results in Tables~\ref{tab:owa} and~\ref{tab:cwa},
\M~outperforms all baselines in terms of $F_{TSP}$ which is highlighted in gray, demonstrating its superiority. Our results are based on 3 runs with different seeds, following the baselines’ setup for fair comparison.
Concretely, in Wiki79k and Wiki143k, we calculate the RS-POWA and CWA metrics, where CWA is calculated using the average value of $\mathcal T_{pred}$ and $\mathcal T_{pred}^{POWA+}$. \M~ achieves SOTA performance in terms of both RS-POWA and CWA metrics. 
In CFamily, our model also achieves SOTA performance on $JPrecision$ and $F_{TSP}$. Though RGCN shows higher $STRecall$, its low $JPrecision$ results in a poor $F_{TSP}$ score. This indicates that its high recall is achieved by retaining a large but often false set of triples. 
We observe that the traditional knowledge graph embedding method HAKE achieves the second-best results in terms of $F_{TSP}$, which could be attributed to the relative simplicity of the CFamily dataset, i.e., most of the head and tail entities in the triples 
to be predicted are within two hops. 

\subsection{Ablation Studies}
We conduct ablation studies to verify the effectiveness of each module in \M.
w/o RCE removes the RCE in the denoising network and uses random embedding $\mathbf{H}_x^{rand}$ as entity initialization features. w/o Attention replaces the attention in RelDiT with an MLP. w/o $\text{BCE}_w$ uses unweighted BCE loss instead of weighted BCE loss for training. w/o Exinput loss calculates loss including the edges in $G_t^q$ and $G^s$ during training. w/o Rsplit replaces relation-balanced split with random support-query graph sampling. 
According to Table~\ref{tab:ablation}, removing RCE or attention causes about 2\% performance loss on $F_{TSP}$, showing the importance of structure-aware denoising network.  
Since w/o Exinput loss includes these edges in $G^s$ and $G_t^q$, the model may learn a bad mapping from input to output, causing about 8\% performance drop.  
Using unweighted BCE loss causes the largest drop (about 30\%). The non-existence edges in the KG create an extreme class imbalance, and the unweighted BCE loss could mislead the model to achieve a low loss by predicting all edges as absence. Though this minimizes the loss, it prevents the model from learning meaningful patterns, thus leading to poor performance. In contrast, the weighted BCE loss introduces relation class balancing to properly guide the learning process.  


\begin{table}[ht]
\caption{Ablation studies on CFamily dataset.}\label{tab:ablation}
\centering
\scalebox{1}{
\begin{tabular}{lcc>{\columncolor{gray!10}}c}
\toprule
Method& \multicolumn{3}{c}{CFamily}\\
\midrule
& $JPrecision$&$STRecall$ &$F_{TSP}$  \\
\midrule
w/o RCE & 0.602 & \textbf{0.623} & 0.612 \\
w/o Attention & 0.665 & 0.573 & 0.616 \\
w/o $\text{BCE}_w$ & 0.369 & 0.308 & 0.336 \\
w/o Exinput loss & \textbf{0.700} & 0.485 & 0.573 \\
w/o Rsplit &0.637&0.605&0.621\\
\midrule
DiffTSP &0.675&0.600&\textbf{0.635}\\
\bottomrule
\end{tabular}
}
\end{table}

\subsection{Sampling Process Visualization}
Figure~\ref{fig:sample_visual3} visualizes \M’s sampling process on a subgraph from the CFmaliy dataset.
We run 20 denoising steps and show the snapshots at steps 4, 8, 12, 16 and 20.
The first frame is the subgraph $G_{g_{i}}$ in training data, the last frame is the ground-truth subgraph from test data, and the middle five frames show the sampling process.
Surprisingly, although the initial denoising steps predict some incorrect edges (shown in red), the errors do not accumulate. Instead, the model still recovers a large number of correct edges (shown in green) in later steps. 

\begin{figure*}
    \centering
    \includegraphics[width=1\linewidth]{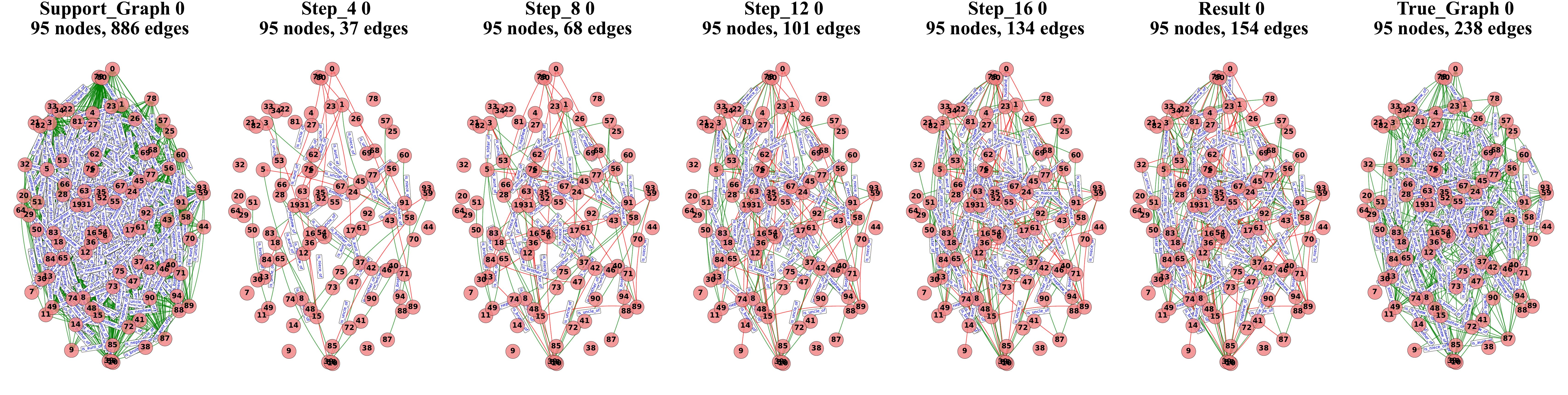}
    \caption{Sampling process visualization for one subgraph in CFamily dataset.}
    
    \label{fig:sample_visual3}
\end{figure*}
\subsection{Hyper-parameter Analysis}
The primary hyperparameters of \M~include the ratio $\rho$ for splitting graphs into support and query graphs, the number of repeat times $N_s$, the number of Dit blocks $N_{Dit}$, and the sampling steps. 
Figure~\ref{fig:hyper} presents the results of \M~while varying these hyperparameters.
We observe that a low ratio $\rho$ leads to a slight decrease in the $F_{TSP}$ score but a relatively higher $STRecall$. This suggests that providing less support encourages the model to predict a broader range of possible triples. Conversely, an excessively high ratio also negatively impacts the $F_{TSP}$ score, likely because it hinders the model's ability to complete missing edges comprehensively.
\M~demonstrates robustness regarding $N_s$ since its performance remains consistent regardless of the value of $N_s$.
For $N_{Dit}$, we observe that increasing the number of blocks generally leads to improved performance. However, to strike a balance between model efficiency and performance, we set $N_{Dit}=3$.
Despite \M~is built on the DDPM framework, it achieves promising results with a small number of steps. This is because the model benefits from the support graph, rather than reconstructing the graph from scratch. However, as the number of steps increases, there is a slight decrease in the $F_{TSP}$ metric. We attribute this to the accumulation of erroneous edges during the sampling process. This issue becomes more pronounced with additional steps, leading to a decline in prediction accuracy.
\begin{figure*}
    \centering
    \includegraphics[width=1\linewidth]{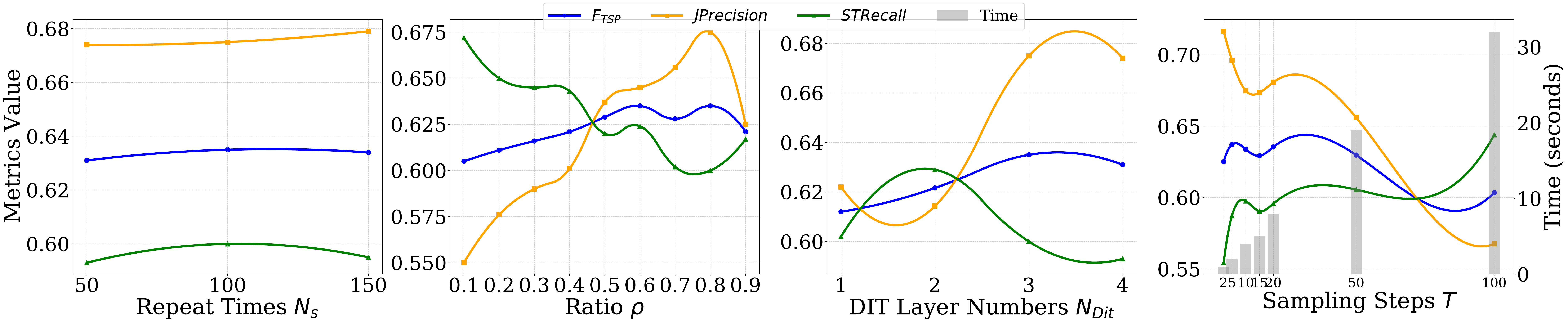}
    \caption{Model performance with different hyperparameter settings.}
    \label{fig:hyper}
\end{figure*}

\begin{table*}
    \caption{Case Studies: * indicates a logical contradiction with other predictions for the same head entity.} 
    \label{tab:case} 
    \centering
    \scalebox{1}{
    \begin{tabular}{l|l|l|l}
        \toprule
        \textbf{HAKE} & \textbf{AstarNet} & \textbf{DiffTSP} & \textbf{Test Data (Ground Truth)} \\
        \midrule
        \makecell[l]{(35, is\_brother\_of, 31)\text{*}\\(35, is\_sister\_of, 36)\text{*}} &
        \makecell[l]{(35, is\_daughter\_of, 14)\text{*}\\(35, is\_father\_of, 106)\text{*}\\(35, is\_nephew\_of, 105)} &
        \makecell[l]{(35, is\_nephew\_of, 18)\\(35, is\_brother\_of, 31)\\(35, is\_nephew\_of, 105)} &
        \makecell[l]{(35, is\_nephew\_of, 17)\\(35, is\_brother\_of, 31)\\(35, is\_nephew\_of, 105)} \\
        \midrule
        \makecell[l]{(87, is\_sister\_of, 88)\text{*}\\(87, is\_aunt\_of, 90)\\(87, is\_father\_of, 317)\\(87, is\_brother\_of, 88)\text{*}} &
        \makecell[l]{(87, is\_sister\_of, 88)\text{*}\\(87, is\_son\_of, 69)*\\(87, is\_uncle\_of, 317)\\(87, is\_daughter\_of, 69)\text{*}} &
        \makecell[l]{(87, is\_nephew\_of, 36)\\(87, is\_uncle\_of, 317)\\(87, is\_nephew\_of, 35)} &
        \makecell[l]{(87, is\_brother\_of, 88)\\(87, is\_uncle\_of, 317)\\(87, is\_son\_of, 69)} \\
        \midrule
        \makecell[l]{(2630, is\_brother\_of, 2631)\text{*}\\(2630, is\_sister\_of, 2629)\text{*}\\(2630, is\_niece\_of, 481)} &
        \makecell[l]{(2630, is\_sister\_of, 2631)\text{*}\\(2630, is\_nephew\_of, 57)\\(2630, is\_brother\_of, 2631)\text{*}} &
        \makecell[l]{(2630, is\_nephew\_of, 147)} &
        \makecell[l]{(2630, is\_brother\_of, 2631)\\(2630, is\_nephew\_of, 57)} \\
        \midrule
        \makecell[l]{(243, is\_son\_of, 2984)\text{*}\\(243, is\_nephew\_of, 2997)\\(243, is\_wife\_of, 239)\text{*}} &
        \makecell[l]{(243, is\_wife\_of, 239)\text{*}\\(243, is\_son\_of, 2984)\text{*}} &
        \makecell[l]{(243, is\_wife\_of, 239)} &
        \makecell[l]{(243, is\_wife\_of, 239)} \\
        \bottomrule
    \end{tabular}
    }
    
\end{table*}

\subsection{Case Studies}
Table~\ref{tab:case} presents some case studies to compare the predictions of HAKE, AstarNet, and DiffTSP against the ``Test Data" (the ground truth). 
A key observation is the presence of conflicting predictions in HAKE and AstarNet, which are notably absent in DiffTSP. For instance,
in Case 1 (Head entity 35), HAKE predicts (35, is\_brother\_of, 31) and (35, is\_sister\_of, 36). This implies that entity 35 is both a brother and a sister, which is a clear logical contradiction. AstarNet also exhibits a similar pattern for entity 35, predicting (35, is\_daughter\_of, 14) and (35, is\_father\_of, 106), another type of relational conflict. In contrast, DiffTSP provides consistent predictions for entity 35: (35, is\_nephew\_of, 18), (35, is\_brother\_of, 31), and (35, is\_nephew\_of, 105), aligning well with the ``Test Data" and avoiding internal contradictions.
These examples highlight DiffTSP's superior ability to maintain dependency in its predictions.

\subsection{Effectiveness Analysis of the Generative Framework}
\label{sec:difftsprepaint}
To validate the effectiveness of the overall framework of DiffTSP, we also adapt the training and sampling strategies from Repaint~\cite{lugmayr2022repaint} to DiffTSP, named as DiffTSP-repaint. 
The new training phase thus does not use the support-query learning paradigm. Instead, we train DiffTSP-repaint to denoise the entire graph. The forward process gradually adds noise to the graph $G_{g_i}$ rather than the query graph $G^{q}$. The loss function for training the denoising network is the same as DiffTSP. 
This trains DiffTSP-repaint for an unconditional graph reconstruction task, without explicitly teaching it to complete a graph from a support set.
During the sampling process, we use the trained denoising network to generate new triples, conditioned on the known graph $G^S$. We create a binary mask $m$ from $G^S$ to distinguish between known and unknown edges:

\begin{equation}
m[i, j, k] =
\begin{cases}
1 & \text{if } (v_i, r_k, v_j) \in G^S \\
0 & \text{otherwise}
\end{cases}
\end{equation}
The known edges are where $m=1$, and the unknown edges are where $m=0$.
For each sampling step from $t$ to $t-1$, we perform three operations:
First, we use the denoising network to predict the state of the unknown edges of the graph:
    \begin{equation}
        G_{t-1}^{\text{unknown}} \sim p_\theta(G_{t-1} | G_t).
    \end{equation}
Next, for the known edges (defined by the support graph $G^S$), we do not use the model's prediction. Instead, we re-introduce them by sampling from the forward process. We take the known edges from $G^S$ and add $t-1$ steps of noise to them, obtaining a noisy graph:
    \begin{equation}
        G_{t-1}^{\text{known}} \sim q(G_{t-1} | G^S).
    \end{equation}
Finally, we combine the two parts using a binary mask $m$ derived from $G^S$. The complete graph for the next step $G_{t-1}$ is constructed as:
    \begin{equation}
        G_{t-1} = (m \odot G_{t-1}^{\text{known}}) + ((1-m) \odot G_{t-1}^{\text{unknown}}),
    \end{equation}
where $\odot$ denotes element-wise multiplication. This combined graph $G_{t-1}$ then serves as the input for the next sampling step.




According to Table~\ref{tab:diffrepaint}, DiffTSP-repaint exhibits a performance degradation of approximately 4\% on the $F_{TSP}$ metric. This decline is attributed to its training methodology, which directly restores the entire graph without leveraging the support-query learning paradigm. Without using a support graph as a condition during training, the model's capability to effectively restore incomplete graphs is compromised.
\begin{table}
\caption{Performance of DiffTSP and DiffTSP-repaint in CFamily dataset.}
\label{tab:diffrepaint}
\centering
\scalebox{1}{
\begin{tabular}{lcc>{\columncolor{gray!10}}c}
\toprule
Method& \multicolumn{3}{c}{CFamily}\\
& $JPrecision$&$STRecall$ &$F_{TSP}$  \\
\midrule

DiffTSP-repaint & 0.508 & \textbf{0.704} & 0.590 \\
DiffTSP &\textbf{0.675}&0.600&\textbf{0.635}\\
\bottomrule
\end{tabular}
}
\end{table}

\subsection{Efficiency Analytics}
\label{sec:effi}
Figure~\ref{fig:effi} details the runtime comparison. 
The runtime for these methods is measured under the same experimental conditions. We specifically select GPHT and AstarNet as representative baselines for this comparison. GPHT is chosen as it is the current state-of-the-art method specialized for the TSP task. AstarNet is included as a strong and efficient representative of link prediction models adapted for TSP. The comparison reveals that DiffTSP has a notable efficiency advantage for the TSP task. 
\begin{figure*}
    \centering
    \includegraphics[width=0.6\linewidth]{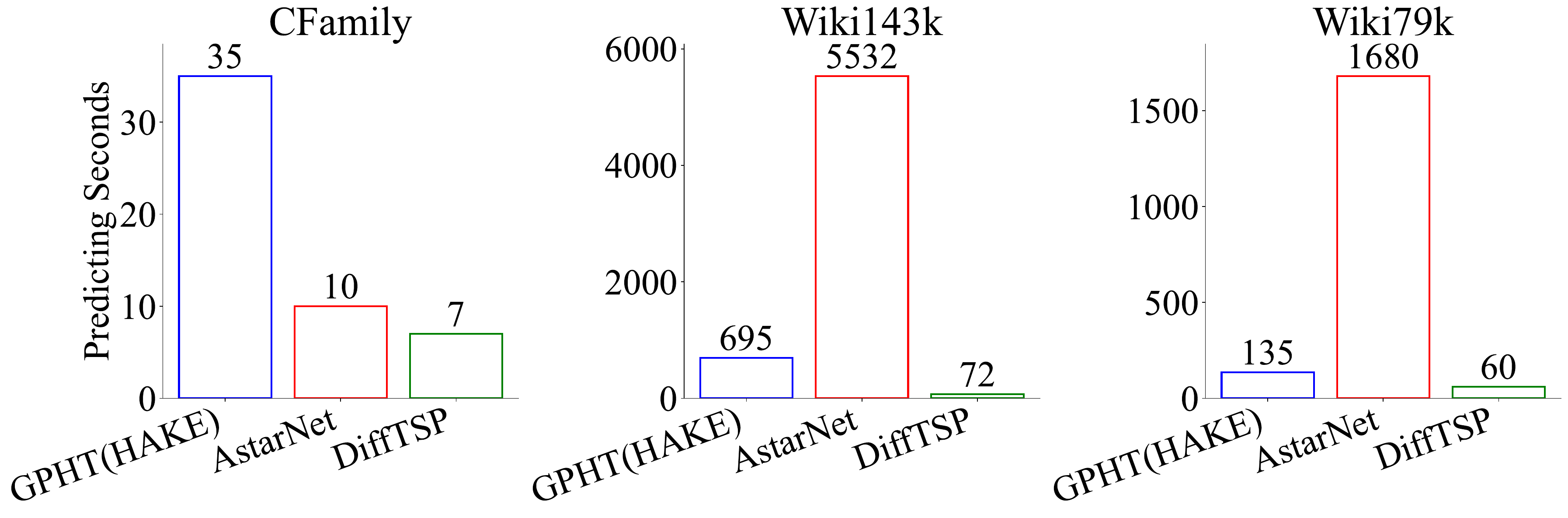}
    \caption{The predicting time of different methods on CFamily, Wiki79k, and Wiki143k.}
    \label{fig:effi}
\end{figure*}

\section{Conclusion}
In this work, we treat TSP as a generative task, and propose a novel discrete diffusion model named DiffTSP which could effectively capture the interdependence among the predicted triples. In addition, we design a structure-aware denoising network to combine relation-guided message passing with global structural attention for generating knowledge graphs of high quality. Extensive experiments on multiple datasets showcase the superior performance of DiffTSP compared to strong baselines.

\section*{Acknowledgments}
This work was supported in part by the National Natural Science Foundation of China (No. 62372326).


%

\bibliographystyle{IEEEtran}
\bibliography{ref}

\section{Biography Section}

\vspace{-10 mm}
\begin{IEEEbiography}[{\includegraphics[width=1in,height=1.25in,clip,keepaspectratio]{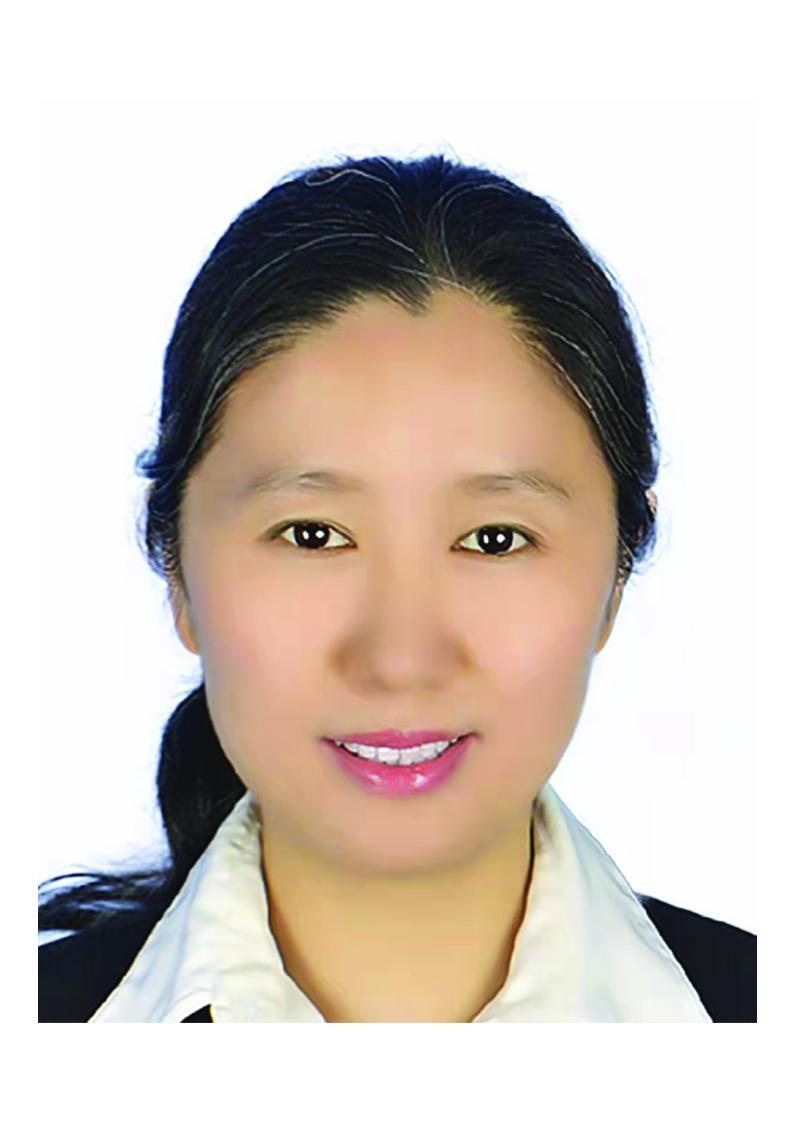}}]{Jihong Guan}
received the bachelor's degree from Huazhong Normal University in 1991, the master's  degree
from Wuhan Technical University of Surveying and Mapping (merged into Wuhan University since
2000) in 1991, and the PhD degree from Wuhan University in 2002. She is currently a professor
in the School of Computer Science and Technology, Tongji University, Shanghai, China. Before
joining Tongji University, she served in the Department of Computer, Wuhan Technical
University of Surveying  and Mapping from 1991 to 1997, as an assistant professor and
an associate professor (since August 2000), respectively. She was an associate professor
(2000-2003) and a professor (Since 2003) in the School of Computer, Wuhan University.
Her research interests include databases, data mining, distributed computing, bioinformatics,
and geographic information systems (GIS).
\end{IEEEbiography}

\vspace{-10 mm}
\begin{IEEEbiography}[{\includegraphics[width=1in,height=1.25in,clip,keepaspectratio]{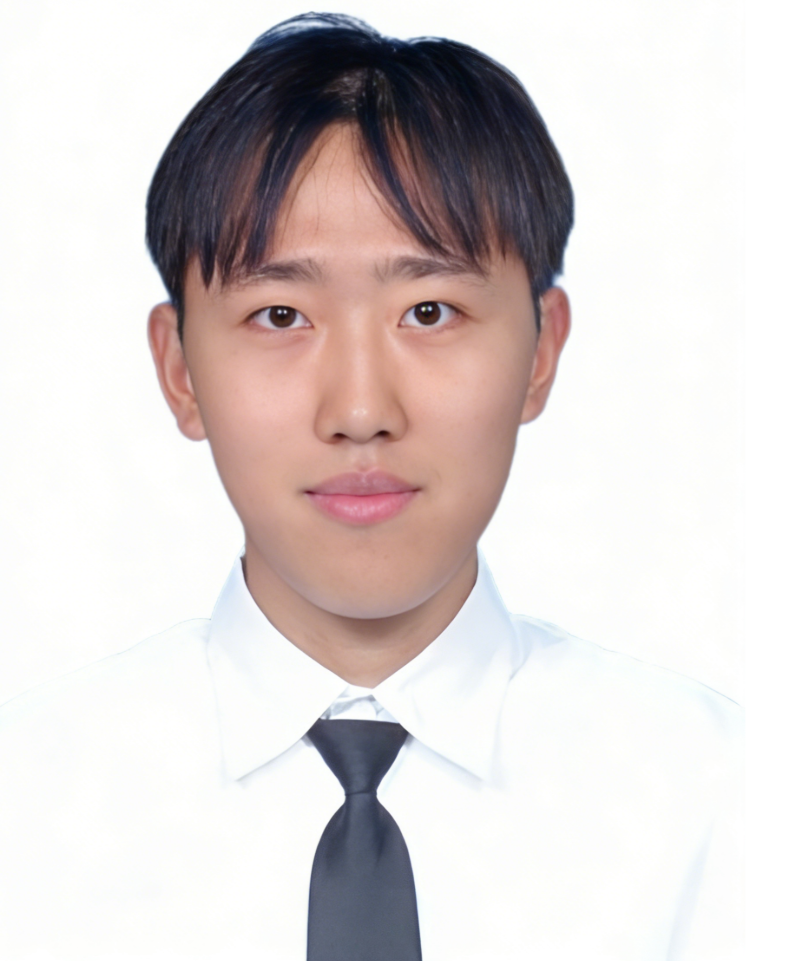}}]{Jiaqi Wang}  received a bachelor's degree in Computer Science and Technology with a dual degree in Mathematics from Tongji University, Shanghai, China, in 2022. He is currently pursuing a Ph.D. in Computer Science and Technology at Tongji University, Shanghai, China. His main research interests include knowledge graph and data mining.
\end{IEEEbiography}

\vspace{-10 mm}
\begin{IEEEbiography}[{\includegraphics[width=1in,height=1.25in,clip,keepaspectratio]{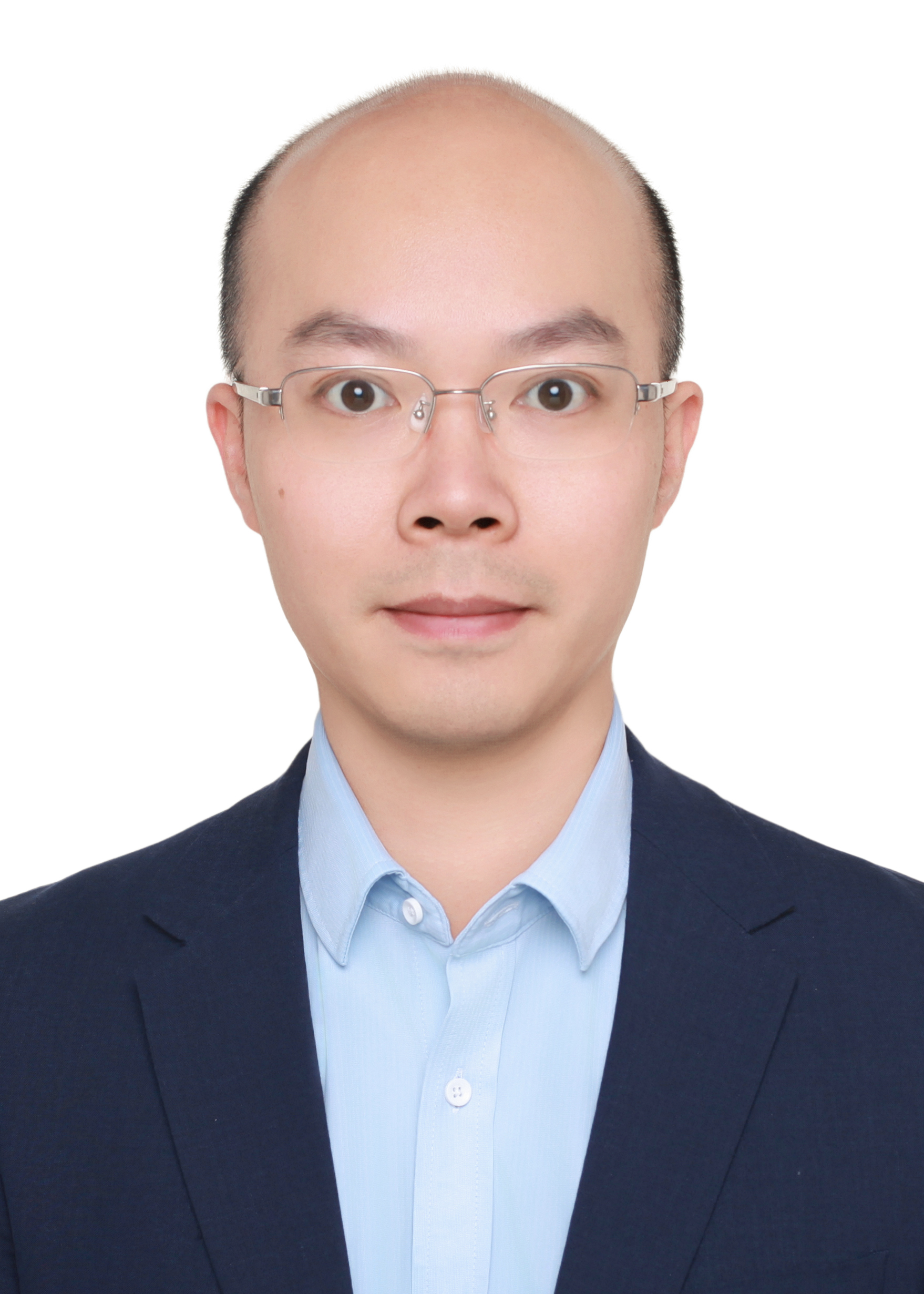}}]{Wengen Li}
received the B.Eng. degree and Ph.D. degree in Computer Science from Tongji University, Shanghai, China, in 2011 and 2017, respectively. In addition, he received a dual Ph.D. degree in Computer Science from the Hong Kong Polytechnic University in 2018. He is currently an Associate Professor of the School of Computer Science and Technology at Tongji University. His research interests include multi-modal artificial intelligence, and spatio-temporal intelligence for urban computing and ocean computing. He is a member of China Computer Federation (CCF), a member of IEEE, and a member of ACM.
\end{IEEEbiography}

\vspace{-10 mm}
\begin{IEEEbiography}[{\includegraphics[width=1in,height=1.25in,clip,keepaspectratio]{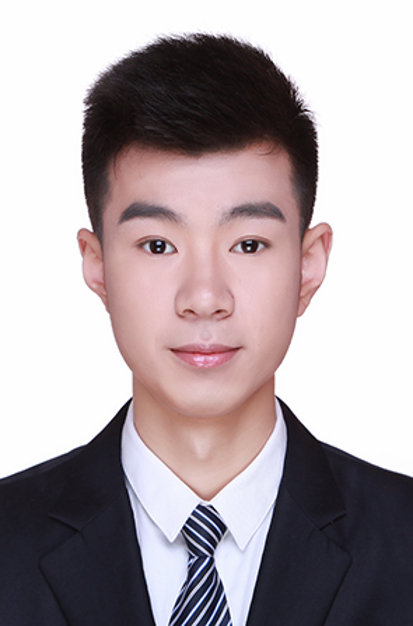}}]{Hanchen Yang}
received the Bachelor's degree from Beijing jiaotong University, Beijing, China, in 2020.
  He is working toward a dual Ph.D. at the Department of Computer Science and Technology, Tongji University and Hong Kong Polytechnic University, Hong Kong, China. Also, he is currently a visiting scholar at the University of Illinois, Chicago.
  His research interests include spatial-temporal data mining, graph neural networks, and large foundation models.
\end{IEEEbiography}

\vspace{-10 mm}
\begin{IEEEbiography}[{\includegraphics[width=1in,height=1.25in,clip,keepaspectratio]{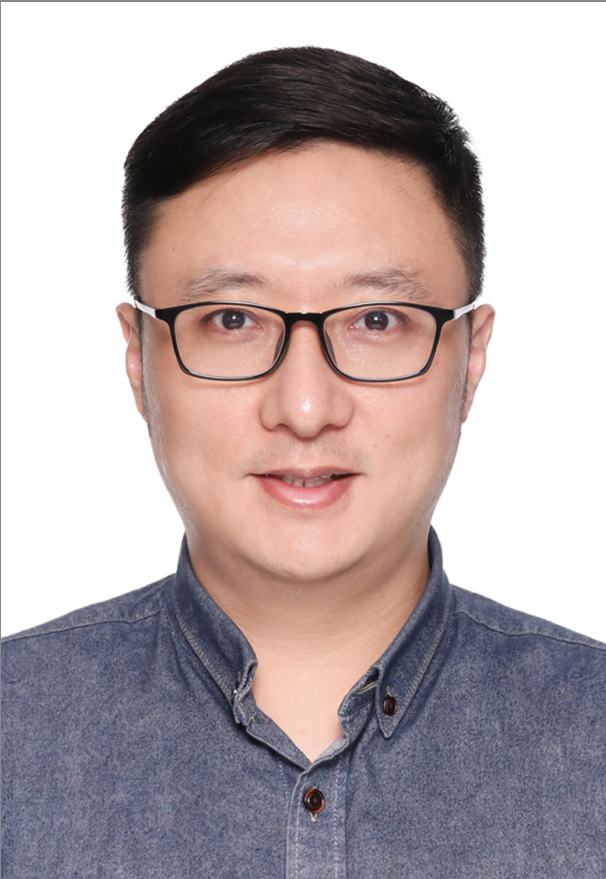}}]{Yichao Zhang} received his Ph.D degree in Computer Science and Technology from Tongji University, Shanghai, China. Currently, he is an Associate Professor at the Department of Computer Science and Technology of Tongji University, Shanghai, China. His research interests include information diffusion, link prediction, modelling of weighted networks, random diffusion on weighted networks, and evolutionary games on networks.
\end{IEEEbiography}

\begin{IEEEbiography}[{\includegraphics[width=1in,height=1.25in,clip,keepaspectratio]{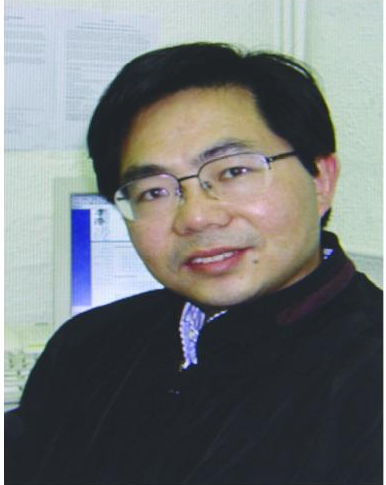}}]{Shuigeng Zhou}
is a professor of School of Computer Science, Fudan University, Shanghai, China. He received his Bachelor degree from Huazhong University of Science and Technology (HUST) in 1988, his Master degree from University of Electronic Science and Technology of China (UESTC) in 1991, and his PhD of Computer Science from Fudan University in 2000. He served in Shanghai Academy of Spaceflight Technology from 1991 to 1997, as an engineer and a senior engineer (since August 1995) respectively. He was a post-doctoral researcher in State Key Lab of Software Engineering, Wuhan University from 2000 to 2002. His research interests include big data management and analysis, artificial intelligence, and bioinformatics. He has published more than 200 papers in domestic and international journals and conferences. Currently, he is a senior member of IEEE and a member of ACM.
\end{IEEEbiography}

\vfill

\end{document}